%% file: sample-sigconf.tex
\documentclass[sigconf,natbib=true,screen,anonymous=false]{acmart}
\usepackage{xspace,balance,tabularx,multirow}
\usepackage{flushend}
\usepackage{tikz}
\usepackage{pgfplots}
\usepackage{tikz}
\usepackage{pgfplots}
\usepgfplotslibrary{groupplots} %
\usetikzlibrary{matrix, positioning, calc} %
\pgfplotsset{compat=1.18}

\usepackage{algorithmic}
\usepackage{boldline}
\usepackage{makecell}

\usepackage{tcolorbox}

\definecolor{pltBlue}{HTML}{1F77B4}   %
\definecolor{pltOrange}{HTML}{FF7F0E} %
\definecolor{pltGreen}{HTML}{2CA02C}  %
\pgfplotsset{compat=1.16}
\usetikzlibrary{patterns}
\usepackage{subfig}
\usepackage[ruled, vlined, linesnumbered]{algorithm2e}
\SetKw{continue}{continue}
\usepackage{xcolor}
\usepackage{colortbl}
\usepackage{bbold}
\SetKwComment{Comment}{$\triangleright$\ }{}
\usepackage{enumitem}
\usepackage{tablefootnote}
\usepackage{upgreek,textgreek}
\usepackage{pifont}%
\usepackage[noabbrev]{cleveref}
\usepackage{titlecaps}
\usepackage{lipsum}
 
\usepackage{bm}

\pgfplotsset{every tick label/.append style={font=\tiny}}

\newlength{\starsize}
\newlength{\starspread}
\tikzset{starsize/.code={\setlength{\starsize}{#1}},
         starspread/.code={\setlength{\starspread}{#1}}}
\tikzset{starsize=1mm,
         starspread=3mm}
\pgfdeclarepatternformonly[\starspread,\starsize]%
  {my fivepointed stars}%
  {\pgfpointorigin}%
  {\pgfqpoint{\starspread}{\starspread}}%
  {\pgfqpoint{\starspread}{\starspread}}%
  {%
   \pgftransformshift{\pgfqpoint{\starsize}{\starsize}}
   \pgfpathmoveto{\pgfqpointpolar{18}{\starsize}}
   \pgfpathlineto{\pgfqpointpolar{162}{\starsize}}
   \pgfpathlineto{\pgfqpointpolar{306}{\starsize}}
   \pgfpathlineto{\pgfqpointpolar{90}{\starsize}}
   \pgfpathlineto{\pgfqpointpolar{234}{\starsize}}
   \pgfpathclose%
   \pgfusepath{fill}
  }

\input{tex/macro}

\AtBeginDocument{%
  \providecommand\BibTeX{{%
    \normalfont B\kern-0.5em{\scshape i\kern-0.25em b}\kern-0.8em\TeX}}}

\setcopyright{acmcopyright}
\copyrightyear{2018}
\acmYear{2018}
\acmDOI{XXXXXXX.XXXXXXX}

\acmPrice{15.00}
\acmISBN{978-1-4503-XXXX-X/18/06}

\acmSubmissionID{1447}

\begin{document}

\title{Adaptive Graph Refinement and Label Propagation with LLMs for Cost-Effective Entity Resolution}
\subtitle{Technical Report}

\author{Hongtao Wang}
\affiliation{%
  \institution{Hong Kong Baptist University}
  \city{Hong Kong}
  \country{China}
}
\email{cshtwang@comp.hkbu.edu.hk}
\orcid{0009-0002-1279-4357}

\author{Renchi Yang}
\authornote{Corresponding Author}
\affiliation{%
  \institution{Hong Kong Baptist University}
  \city{Hong Kong}
  \country{China}
}
\email{renchi@hkbu.edu.hk}
\orcid{0000-0002-7284-3096}

\author{Haoran Zheng}
\affiliation{%
  \institution{Hong Kong Baptist University}
  \city{Hong Kong}
  \country{China}
}
\email{cshrzheng@comp.hkbu.edu.hk}
\orcid{0009-0002-4769-3716}

\author{Xiangyu Ke}
\affiliation{%
  \institution{Zhejiang University}
  \city{Hangzhou}
  \country{China}
}
\email{xiangyu.ke@zju.edu.cn}
\orcid{0000-0001-8082-7398}

\settopmatter{printfolios=true}

\renewcommand{\shortauthors}{Wang et al.}

\begin{abstract}
{\em Dirty entity resolution} (ER), which identifies records referring to the same real-world entity from a single, messy dataset, is a fundamental task in data management and mining.
However, the dominant {\em blocking-matching-clustering} paradigm for ER suffers from critical flaws. Its cascaded, decoupled workflow essentially produces a static, sparse graph plagued by missing edges (due to blocking failures) and noisy links (due to matching errors), causing error propagation and yielding suboptimal clusters, particularly when rigid transitivity is imposed in the clustering.

We contend that matching and clustering are fundamentally synergistic, both optimizing for the construction of an ideal entity graph. Building upon this insight, we propose \algo{}, a unified framework that integrates these steps into an iterative probabilistic label propagation process over a global, evolving graph.
Unlike disjoint blocking, \algo{} refines the graph structure and labels dynamically by adaptively integrating ``weak but cheap'' signals from graph propagation with ``strong but expensive'' LLM-based pairwise queries. 
For higher cost-effectiveness, we formulate the signal selection as a constrained optimization problem maximizing cumulative marginal gain under a query budget, solved via our greedy algorithm with provable theoretical guarantees. Our extensive experiments over eight benchmark datasets demonstrate that \algo{} is consistently superior to state-of-the-art cascaded pipelines.
\end{abstract}

\maketitle

\input{tex/introduction}

\input{tex/relatedwork}

\input{tex/preliminary}

\input{tex/method}

\input{tex/experiments}
\section{Conclusion}
In this paper, we propose \algo{}, a unified framework for cost-effective ER that synergizes graph-based label propagation and LLMs. Our method iteratively refines graph topology and entity labels, featuring a theoretically grounded \emph{adaptive signal selection} mechanism modeled as an \emph{Online Knapsack Problem} to maximize marginal value gain under strict budget constraints. We also demonstrate how to effectively shatter the recall ceiling of static blocking by dynamically integrating ``weak but free'' structural signals with ``strong but expensive'' LLM reasoning. Experiments on eight benchmark datasets show that \algo{} consistently achieves state-of-the-art performance in both accuracy and monetary cost-effectiveness compared to strong baselines like \texttt{BatchER} and \texttt{LLM-CER}, validating our integrated, budget-aware approach.

\begin{acks}
Renchi Yang is supported by the Guangdong and Hong Kong Universities ``1+1+1'' Joint Research Collaboration Scheme, project No.: 2025A0505000002, the NSFC (No. 62302414), the Hong Kong RGC ECS grant (No. 22202623), and YCRG (No. C2003-23Y).
Xiangyu Ke is supported by the NSFC (No. 62502434) and the Ningbo Yongjiang Talent Introduction Programme (2022A-237-G).
\end{acks}

\balance
\bibliographystyle{ACM-Reference-Format}
\bibliography{sample-base}

\pagebreak
\appendix
\input{tex/add-relatedwork}

\input{tex/proof}

\input{tex/add-exp}

\end{document}

%% file: tex/macro.tex
\newcommand{\argmax}[1]{\underset{#1}{\operatorname{arg}\,\operatorname{max}}\;}

\makeatletter
\newcommand*\bigcdot{\mathpalette\bigcdot@{.5}}
\newcommand*\bigcdot@[2]{\mathbin{\vcenter{\hbox{\scalebox{#2}{$\m@th#1\bullet$}}}}}
\makeatother

\newcommand{\eat}[1]{}

\newcommand{\stitle}[1]{\vspace*{0.5em}\noindent{\underline{\bf #1.\/}}}

\newcommand{\G}{\mathcal{G}\xspace}
\newcommand{\N}{\mathcal{N}\xspace}

\newcommand{\rvec}{\vec{\mathbf{r}}\xspace}

\newcommand{\C}{\mathcal{C}\xspace}

\newcommand{\algo}{\texttt{Alper}\xspace}

\SetKwComment{tcp}{\(\triangleright\)\ }{}

\newcolumntype{?}[1]{!{\vrule width #1}}

\newenvironment{customlegend}[1][]{%
    \begingroup
    \csname pgfplots@init@cleared@structures\endcsname
    \pgfplotsset{#1}%
}{%
    \csname pgfplots@createlegend\endcsname
    \endgroup
}%

\def\addlegendimage{\csname pgfplots@addlegendimage\endcsname}

\makeatletter
\newcommand\footnoteref[1]{\protected@xdef\@thefnmark{\ref{#1}}\@footnotemark}
\makeatother

\let\oldnl\nl%
\newcommand{\nonl}{\renewcommand{\nl}{\let\nl\oldnl}}%

\SetKwComment{Comment}{/* }{ */}

\makeatletter 
\g@addto@macro{\@algocf@init}{\SetKwInOut{Parameter}{Parameters}} 
\makeatother

\definecolor{myred}{HTML}{fd7f6f}
\definecolor{myred_new}{HTML}{D8D8D8}
\definecolor{myred_new2}{HTML}{D7191C}
\definecolor{myblue}{HTML}{7eb0d5}
\definecolor{mygreen}{HTML}{b2e061}
\definecolor{mypurple}{HTML}{bd7ebe}
\definecolor{myorange}{HTML}{ffb55a}
\definecolor{myyellow}{HTML}{ffee65}
\definecolor{mypurple2}{HTML}{beb9db}
\definecolor{mypink}{HTML}{fdcce5}
\definecolor{mycyan}{HTML}{8bd3c7}

\definecolor{myblue2}{HTML}{115f9a}
\definecolor{myred2}{HTML}{c23728}

%% file: tex/introduction.tex
\section{Introduction}
{\em Entity Resolution} (ER), broadly defined as identifying records referring to the same real-world entity, is a cornerstone of data quality management~\cite{fellegi1969theory, elmagarmid2006duplicate, hassanzadeh2009framework}. It underpins a wide spectrum of critical downstream applications, ranging from knowledge graph construction~\cite{dong2014knowledge} and e-commerce product management~\cite{kannan2011matching}, to CRM~\cite{konda2016magellan}, financial fraud detection~\cite{phua2010comprehensive}, and census data integration~\cite{winkler2006overview}. 
While traditional {\em record linkage}~\cite{christophides2020overview} (or {\em clean-clean} ER) typically focuses on linking records across two aligned and relatively clean sources, this paper targets the more challenging problem of {\em dirty} ER~\cite{christen2011survey} (or {\em deduplication}), which aims to detect duplicates within a single, messy dataset containing an unknown number of entities where data quality is severely compromised by typographical errors, missing values, and inconsistent representations. This setting is inherently more complex, as it requires not only resolving pairwise ambiguities but also deducing the global cluster structure under strict transitivity constraints, making it tenaciously challenging to balance precision, recall, and cost~\cite{chai2016cost, demartini2012zencrowd, wang2012crowder, whang2013question}. 

Existing research landscape for {\em dirty} ER generally evolves along three paradigms: {\em learning-based methods}~\cite{ebraheem2017deeper, li2020deep, ge2021collaborem}, {\em crowdsourcing-based} strategies~\cite{wang2012crowder, demartini2012zencrowd, verroios2015entity}, and the recent {\em Large Language Model} (LLM)-driven approaches~\cite{fan2024cost, fu2025context, wang2025match}.
Specifically, learning-based methods span from unsupervised probabilistic generative models like \texttt{ZeroER}~\cite{wu2020zeroer} to advanced deep learning architectures, including {\em graph neural networks} (GNNs)~\cite{yao2022entity} and {\em pre-trained language models} (PLMs)~\cite{ge2021collaborem,li2020deep},
which frame ER as a sequence pairwise classification task. Despite being effective, this methodology typically demands extensive labeled data or substantial training overhead.
Crowdsourcing-based strategies are designed to handle highly ambiguous cases when automated methods fail. By regarding the ER as a number of {\em Human Intelligence Tasks} (HITs), these methods pioneer the use of active learning and budget-constrained optimization to select the most informative pairs or sets for human verification~\cite{wang2012crowder,demartini2012zencrowd}, guided by a philosophy of cost-effectiveness. However, relying on human workers is intrinsically slow, costly, and difficult to scale. With the advent of LLMs, LLM-driven approaches have opened a new frontier for ER by tapping into the extensive knowledge, semantic understanding, and human-like reasoning capabilities of LLMs.
\citet{narayan2022can} first leverage LLMs to perform binary classification on record pairs, inspiring a series of follow-up studies~\cite{fan2024cost, li2024leveraging, zhang2023large}. Amid them, state-of-the-art approaches typically leverage LLMs for precise pairwise verification~\cite{fan2024cost,zhang2023large} like \texttt{BatchER}~\cite{fan2024cost} or set-wise selection and clustering~\cite{wang2025match,fu2025context} as in \texttt{LLM-CER}~\cite{fu2025context}. 

Despite recent advancements, the majority of existing solutions, including the recent LLM-based ones, predominantly adhere to the canonical {\em blocking-matching-clustering} (BMC) workflow~\cite{christophides2020overview, papadakis2020blocking, elmagarmid2006duplicate}. This cascaded pipeline first partitions records into blocks to reduce complexity, then identifies matches, and finally groups them into clusters.
However, the decoupled nature of the BMC paradigm incurs critical deficiencies, particularly for {\em dirty} ER tasks. First, the static blocking phase creates a theoretical {\em recall ceiling}~\cite{papadakis2020blocking}. More precisely, blocking is typically a one-off filtering step based on surface-form similarities, and hence, true matches separated into different blocks are permanently lost. Neither advanced PLMs nor powerful LLMs can recover these lost pairs in subsequent stages, strictly bounding the maximum achievable recall~\cite{niknam2022role}. 
Second, the decoupled workflow prevents mutual refinement between stages. 
The matching phase often performs ``blind matching'' without awareness of the global graph structure (e.g., {\em transitivity}~\cite{wang2013leveraging, niknam2022role, bansal2004correlation}), while the clustering phase is forced to operate on a static graph plagued by missing edges and conflicting links caused by upstream errors. Third, regarding cost-efficiency, existing LLM-based methods~\cite{fan2024cost,zhang2023large,wang2025match,fu2025context} lack the sophistication of crowdsourcing strategies. They often rely on exhaustive queries or simple heuristics, failing to adaptively distinguish between ``easy'' cases resolvable by transitivity and ``hard'' tasks that truly necessitate expensive LLM reasoning, leading to wasteful budget consumption.

In response to these challenges, we present \algo{} (\underline{A}daptive \underline{L}abel \underline{P}ropagation for \underline{E}ntity \underline{R}esolution), a holistic framework that fundamentally redefines dirty ER as a dynamic optimization problem on an evolving graph.
Distinct from the rigid BMC paradigm that suffers from an inherent recall ceiling due to static blocking and blind matching, \algo{} unifies these disjoint stages into an iterative graph and label refinement process. It jointly optimizes graph topology and record labels, enabling the recovery of missing links through {\em transitive closure} that were permanently lost in traditional pipelines. More specifically, to balance precision and budget, \algo{} synergizes ``weak but free'' signals from structural transitivity with ``strong but expensive'' signals from LLMs.
Instead of exhaustive querying, we employ \emph{probabilistic label propagation} to utilize neighborhood consistency for inferring ``easy'' matches, reserving the precious budget for ``hard'' boundary cases.
To rigorously control costs, \algo{} formulates the signal selection as an \emph{Online Knapsack Problem}~\cite{chakrabarty2008online}. We introduce an uncertainty-based metric, \textit{marginal value gain} (MVG), derived from {\em label perplexity}, to decide when to trigger pairwise queries to LLMs dynamically.
This theoretically-grounded strategy ensures that the budget is adaptively allocated to queries that offer the maximum information gain for global graph structure refinement.
Furthermore, \algo{} incorporates a \emph{local update} mechanism where LLMs act not just as verifiers but as reasoners to select the best match from top-$k$ candidates, effectively repairing the graph topology and mitigating error propagation. Our extensive experiments on eight real-world benchmark datasets demonstrate that \algo{} significantly outperforms state-of-the-art baselines (including \texttt{BatchER} and \texttt{LLM-CER}) in terms of both ER performance and cost-effectiveness. The consistent superiority of \algo{}, particularly on complex datasets with high entity dispersion, highlights the effectiveness of our iterative graph refinement framework and adaptive signal selection technique.

%% file: tex/relatedwork.tex
\section{Related Work}

We broadly categorize existing ER literature into learning-based, crowdsourcing-based, and LLM-driven paradigms. Due to space constraints, we defer detailed discussions on traditional rule-based methods~\cite{fellegi1969theory,yujian2007normalized}, early GNN-based models~\cite{konda2016magellan, wang2017efficient}, and specific blocking techniques~\cite{paulsen2023sparkly,ebraheem2017deeper} to Appendix~\ref{app:related_work}.

\subsection{Learning-based Entity Resolution}

Recent advancements in learning-based ER have shifted from hand-crafted features to deep learning, particularly PLMs. Supervised methods, such as \texttt{Ditto}~\cite{li2020deep} and \texttt{JointBERT}~\cite{peeters2021dual}, fine-tune PLMs to capture fine-grained semantic interactions in pairwise matching. To circumvent the high cost of annotation, unsupervised and self-supervised frameworks like \texttt{ZeroER}~\cite{wu2020zeroer} and \texttt{CollaborEM}~\cite{ge2021collaborem} have emerged, utilizing generative models or pseudo-labeling strategies to learn from data without explicit human labels. Notably, approaches like \texttt{CollaborEM} attempt to bridge the gap between structural and semantic signals by integrating GNNs with PLMs. However, these methods typically require extensive labeled samples or incur prohibitive training overheads, limiting their adaptability in low-resource scenarios.

\subsection{Crowdsourcing-based Entity Resolution}
\label{sec:rel_crowd}

Pioneered by \texttt{CrowdER}~\cite{wang2012crowder} and \texttt{ZenCrowd}~\cite{demartini2012zencrowd}, crowdsourcing approaches fundamentally reframe ER as a budget-constrained optimization problem. These methods treat ER as a series of HITs, balancing resolution accuracy against monetary cost. To minimize expensive human annotation, varying optimization strategies are developed: active learning frameworks~\cite{mozafari2014scaling, meduri2020comprehensive} and decision-theoretic models~\cite{yalavarthi2017select} select only the most informative pairs for verification, while partial-order approaches~\cite{chai2016cost} prune the search space by inferring remaining relations. Furthermore, transitivity-based inference~\cite{vesdapunt2014crowdsourcing, wang2013leveraging} is introduced to logically deduce links and skip redundant queries. Despite these theoretical innovations, crowdsourcing faces intrinsic scalability bottlenecks due to the high cost and latency of human labor and the need for redundant verification to mitigate non-expert worker errors.

\subsection{LLM-based Entity Resolution}

In the generative AI era, LLMs offer a scalable and knowledgeable ``pseudo-crowd''~\cite{narayan2022can}. Early works utilize LLMs for binary classification on record pairs~\cite{narayan2022can}. To mitigate high inference costs, approaches like \texttt{BatchER}~\cite{fan2024cost} introduce batch prompting to amortize costs across multiple comparisons. However, these methods often treat comparisons as independent local decisions, overlooking global consistency. Recent state-of-the-art methods leverage LLMs for holistic reasoning: \texttt{ComEM}~\cite{wang2025match} employs a ``match-compare-select'' interaction to resolve ambiguities, while \texttt{LLM-CER}~\cite{fu2025context} utilizes in-context clustering to partition record sets directly. 
Despite these advancements, predominant LLM-based methods strictly adhere to the canonical BMC workflow. They operate solely on candidate blocks generated by static blocking techniques (e.g., \texttt{Sparkly}~\cite{paulsen2023sparkly} or LSH~\cite{ebraheem2017deeper}). Consequently, they suffer from an inherent \textit{recall ceiling}~\cite{papadakis2020blocking, niknam2022role}: true matches separated during the one-off blocking phase are permanently lost, as the downstream LLM has no opportunity to recover them.

%% file: tex/preliminary.tex
\section{Preliminaries}

\subsection{Problem Statement}

Let $\mathcal{R} = \{r_1, r_2, \dots, r_n\}$ be a collection of $n$ records, where each record is associated with a set of (textual, numerical, or categorical) attributes describing a real-world entity. We call $r_i$ and $r_j$ {\em duplicates}, denoted by $r_i\equiv r_j$, if they refer to the same entity.
For known duplicates, they obey the following transitive relations~\cite{benjelloun2009swoosh,wang2013leveraging}.
\begin{itemize}[leftmargin=*]
\item {\em Positive Transitivity}: if $r_1\equiv r_2$ and $r_2\equiv r_3$, then $r_1\equiv r_3$.
\item {\em Negative Transitivity}: if $r_1\equiv r_2$, but $r_2\not\equiv r_3$, then $r_1\not\equiv r_3$.
\end{itemize}

The goal of {\em dirty entity resolution} (dirty ER, a.k.a. deduplication)~\cite{christen2012data} is to partition $\mathcal{R}$ into a set of disjoint clusters (sets of duplicate entity profiles) $\mathcal{C} = \{\C_1, \C_2, \dots, \C_{|\C|}\}$, such that each cluster $\C_k$ corresponds to a unique real-world entity, i.e., $\forall{r_i,r_j\in C_k}\ r_i\equiv r_j$ and $\forall{r_i,r_j\in \C_k\times \C_\ell}$ with $k\neq\ell$, $r_i\not\equiv r_j$.
Particularly, these clusters $\{\C_1, \C_2, \dots, \C_{|\C|}\}$ are said to be transitively-closed.

\subsection{Blocking-Matching-Clustering Paradigm}\label{sec:BMC}
The state-of-the-art ER systems, including deep learning-based and LLM-based approaches, predominantly adopt a \textit{blocking-matching-combining} (BMC) paradigm to handle the quadratic complexity of the problem~\cite{thirumuruganathan2021deep, fan2024cost, fu2025context}. 
Given the set $\mathcal{R}$ of input records, the BMC workflow operates as follows:
\begin{enumerate}[leftmargin=*]
\item \textbf{Blocking:} Potentially similar records in $\mathcal{R}$ are grouped into (usually overlapping) buckets (``blocks'') such that each block $\mathcal{B}_k \subset \mathcal{R} \times \mathcal{R}$ and $|\mathcal{B}_k| \ll n^2$. 
The goal is to generate manageable subsets of candidates to limit comparisons in the next stage. 
\item \textbf{Matching:} A sophisticated matcher $\Phi$ (e.g., a cross-encoder, crowd workers, or an LLM) classifies candidate pairs within each block $\mathcal{B}_k$ as matches or non-matches, i.e., via {\em pairwise comparisons/queries}.
\item \textbf{Clustering:} The clustering (or combining) phase groups pairwise matches into disjoint clusters by enforcing the aforementioned transitive relations.
\end{enumerate}

\begin{figure}[!t]
    \centering
    \includegraphics[width=0.95\columnwidth]{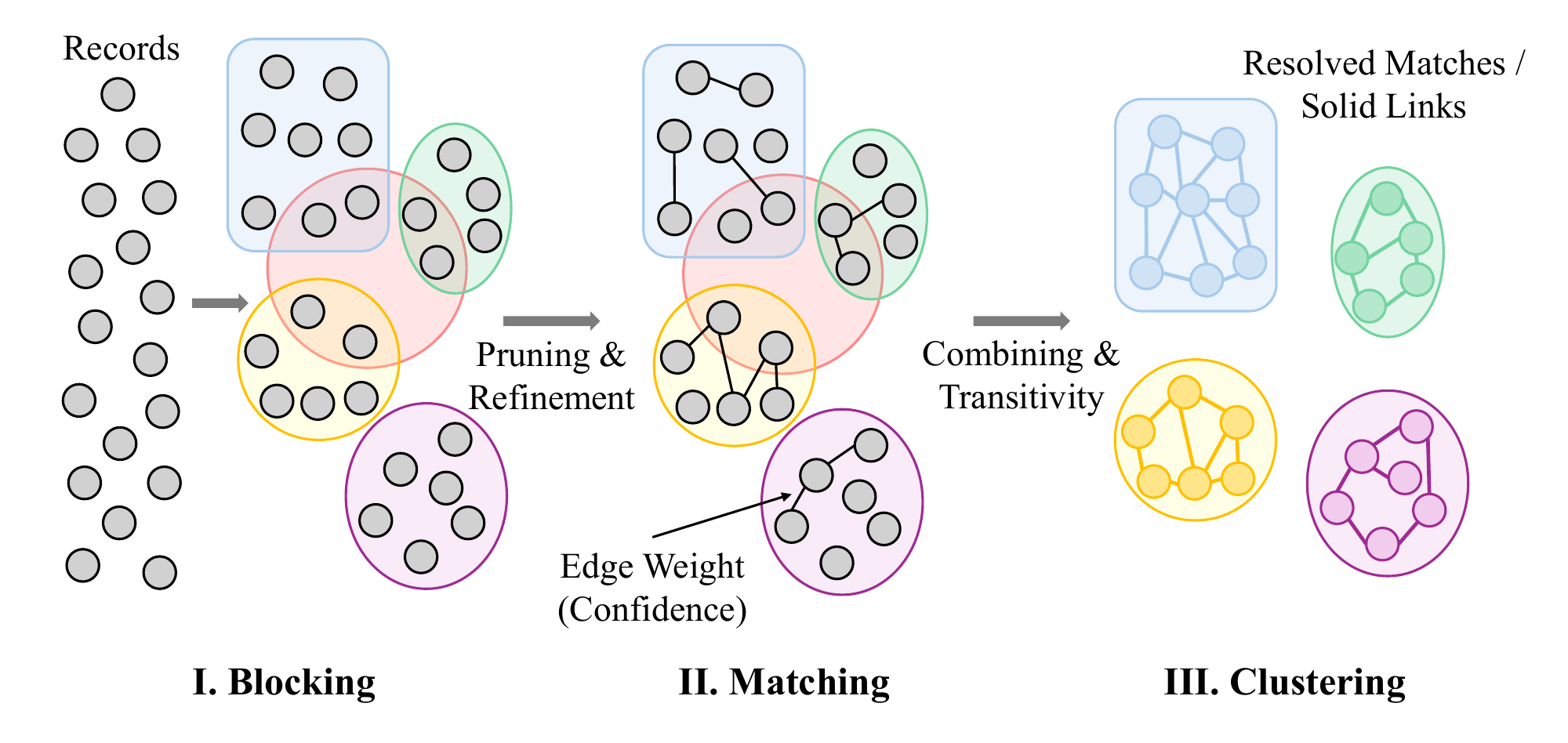} 
    \vspace{-3ex}
    \caption{Graph interpretation of the BMC paradigm.}
    \label{fig:blocking-matching}
\vspace{-3ex}
\end{figure}

\stitle{A Graph-based Interpretation} 
By regarding each record as a node and the positive match of two records as an edge, the ground-truth entity clusters $\mathcal{C}$ are essentially a list of {\em isolated cliques}~\cite{vadrevu2021xer}.
The BMC paradigm can thus be interpreted from a graph perspective. More concretely, as illustrated in Fig.~\ref{fig:blocking-matching}, the blocking stage acts as a static filter for edge pruning. The blocking initializes a list of {\em empty} candidate graphs (i.e., blocks).
Subsequently, the matching phase refines each candidate graph separately by establishing edges for record pairs with high confidence, yielding (weighted) sparse graphs. 
Finally, the above refined candidate graphs are viewed as an entire graph $\G$ containing all detected pairwise matches as edges. The combining step seeks to derive connected components from $\G$ via {\em transitive closure}~\cite{hassanzadeh2009framework,saeedi2018scalable}, or to create clusters that maximize agreement (solid links within a cluster)~\cite{vesdapunt2014crowdsourcing,bansal2004correlation}, thereby enforcing transitive constraints and resolving inconsistencies.

\subsection{Limitations of the BMC Paradigm}\label{sec:limits_phased}

In what follows, we pinpoint that the cascaded pipeline of the BMC paradigm is suboptimal, as its three stages are strictly decoupled, making the performance in the latter phase heavily rely on or be limited by the result quality of the former stages.

\stitle{Static Blocking}
Firstly, although the blocking stage significantly reduces the amount of candidate record pairs for comparisons or queries in the matching stage (from $O(N^2)$ to a superlinear but subquadratic time complexity~\cite{papadakis2020blocking}), this stage builds static and fixed blocks, leading to severe \textit{recall ceiling}~\cite{papadakis2020blocking}.
That is, considerable true matches fall into distinct blocks after blocking, which are hard or even unable to recover in subsequent stages~\cite{niknam2022role}.
As revealed in ~\cite{papadakis2020blocking}, this theoretical bottleneck is caused by the fact that the blocking phase is to attain a trade-off among three conflicting metrics for efficiency, precision/cost-control, and recall.

To illustrate, we empirically evaluate the \textit{ratio of cross-block matches} (CBMR), i.e., the proportion of true matches whose two records exist in distinct blocks, by various blocking strategies, including \texttt{LSH}~\cite{ebraheem2017deeper}, \texttt{Canopy}~\cite{mccallum2000efficient}, \texttt{Sparkly}~\cite{paulsen2023sparkly}, and \texttt{$K$NN}~\cite{thirumuruganathan2021deep}. 
Fig.~\ref{fig:cbmr_analysis} reports the CBMR values of these methods over real datasets {\it Alaska} and {\it Movies} when varying the {\em redundancy factor}, i.e., the average number of blocks a record is assigned to.
Particularly, the redundancy factor is commonly set to a constant between 10 and 20 for balancing efficiency and effectiveness in practice~\cite{fu2025context,fan2024cost,wang2025match}.
It can be observed that all blocking approaches still yield high CBMRs under such settings, especially on {\em Movies}.

\begin{figure}[!t]
\centering
\begin{small}
\begin{tikzpicture}
    \begin{customlegend}
    [legend columns=4, %
        legend entries={\texttt{LSH}, \texttt{$K$NN}, \texttt{Canopy}, \texttt{Sparkly}},
        legend style={at={(0.5,0)},anchor=north,draw=none,font=\footnotesize,column sep=0.3cm}]
    
    \addlegendimage{line width=0.6mm,mark=none,mark size=2.5pt,color=myblue2, dotted}
    \addlegendimage{line width=0.6mm,mark=none,mark size=2.5pt,color=teal}
    \addlegendimage{line width=0.6mm,mark=none,mark size=2.5pt,color=cyan}
    \addlegendimage{line width=0.6mm,mark=none,mark size=2.5pt,color=green!60!black, dashed}
    \end{customlegend}
\end{tikzpicture}
\\[-\lineskip]
\vspace{-3ex}
\subfloat[\em Alaska (Moderate)]{
\begin{tikzpicture}[scale=1,every mark/.append style={mark size=2.5pt}]
    \begin{axis}[
        height=\columnwidth/2.8, %
        width=\columnwidth/2.1,  %
        ylabel={\it CBMR},       %
        xmin=1, xmax=32,
        xtick={1,10,20,30},
        xticklabel style = {font=\footnotesize},
        yticklabel style = {font=\footnotesize},
        ymin=0.2, ymax=1.05,     %
        ytick={0.2, 0.4, 0.6, 0.8, 1.0},
        every axis y label/.style={font=\footnotesize,at={(current axis.north west)},right=10mm,above=0mm}, %
        legend style={draw=none}, %
    ]
    
    \addplot[line width=0.6mm, mark=none, color=myblue2, dotted] 
        plot coordinates { (4.5, 0.86) (7.2, 0.80) (22.0, 0.68) (32.0, 0.62) };

    \addplot[line width=0.6mm, mark=none, color=teal] 
        plot coordinates { (1, 0.98) (10, 0.63) (20, 0.48) (30, 0.36) };

    \addplot[line width=0.6mm, mark=none, color=cyan] 
        plot coordinates { (5.0, 0.82) (10.5, 0.69) (20.5, 0.49) (32.0, 0.34) };
    
    \addplot[line width=0.6mm, mark=none, color=green!60!black, dashed] 
        plot coordinates { (1, 0.98) (10, 0.65) (20, 0.40) (30, 0.28) };

    \end{axis}
\end{tikzpicture}\label{fig:res-alaska}%
}
\hspace{2mm} %
\subfloat[\em Movies (Hard)]{
\begin{tikzpicture}[scale=1,every mark/.append style={mark size=2.5pt}]
    \begin{axis}[
        height=\columnwidth/2.8,
        width=\columnwidth/2.1,
        ylabel={\it CBMR}, 
        xmin=1, xmax=32,
        xtick={1,10,20,30},
        xticklabel style = {font=\footnotesize},
        yticklabel style = {font=\footnotesize},
        ymin=0.6, ymax=1.0, %
        ytick={0.6, 0.7, 0.8, 0.9, 1.0},
        yticklabels={0.6, 0.7, 0.8, 0.9, 1.0},
        every axis y label/.style={font=\footnotesize,at={(current axis.north west)},right=10mm,above=0mm},
    ]
    
    \addplot[line width=0.6mm, mark=none, color=myblue2, dotted] 
        plot coordinates { (2.0, 0.96) (12.0, 0.92) (25.0, 0.85) (32.0, 0.80) };

    \addplot[line width=0.6mm, mark=none, color=teal] 
        plot coordinates { (1, 0.97) (10, 0.85) (20, 0.75) (30, 0.68) };

    \addplot[line width=0.6mm, mark=none, color=cyan] 
        plot coordinates { (5.0, 0.95) (15.0, 0.88) (25.0, 0.78) (29.0, 0.72) };

    \addplot[line width=0.6mm, mark=none, color=green!60!black, dashed] 
        plot coordinates { (1, 0.97) (10, 0.80) (20, 0.72) (30, 0.65) };

    \end{axis}
\end{tikzpicture}\label{fig:res-movies}%
}
\end{small}
\vspace{-2ex}
\caption{The CBMR when varying redundancy factor. 
} 
\label{fig:cbmr_analysis}
\vspace{-3ex}
\end{figure}

\stitle{Blind Matching} 
The matcher is locally focused, which classifies record pairs in each static block in isolation, often regardless of transitive constraints and the global graph structures.
Thus, the resulting graphs are almost always logically inconsistent, i.e., violating transitivity rules~\cite{bansal2004correlation, wang2015crowd}, rendering them hard to be clustered.
Although skilled workers or powerful LLMs can assist in enhancing the quality of edges/matches, which is widely adopted in the literature~\cite{wang2012crowder, narayan2022can}, the required monetary and computational costs are still significant~\cite{whang2013question, fan2024cost, wang2025match}.

\stitle{Clustering on Corrupted Graphs} 
As analyzed above, the blocking and matching steps actually create a static and sparse graph consisting of considerable missing and conflicting edges.
The clustering over such graphs can easily be misled by the conflicts and missing links, particularly by following the transitivity rigidly, which is known as {\em error propagation}~\cite{wang2015crowd}.

%% file: tex/method.tex
\section{Methodology}
\label{sec:methodology}
This section presents \algo{}, a graph-based framework that iteratively and adaptively refines the topology and cluster labels for dirty ER under the budget constraint. 
We begin with the high-level idea of \algo{} in \S\ref{sec:idea} and an overview of its iterative workflow in \S\ref{sec:overview}, followed by 
introducing our signal selection scheme in \S\ref{sec:signal-select}.
The secondary algorithm for local graph refinement is elaborated in \S\ref{sec:local-update}.
Lastly, \S\ref{sec:analysis} offers a detailed analysis of \algo{} in terms of computational complexity and query cost.

\subsection{\bf High-level Idea}\label{sec:idea}

Our analysis in \S\ref{sec:limits_phased} reveals that the cascaded BMC workflow constructs incomplete and noisy graphs for clustering due to its static blocking and blind matching.
Ideally, a global graph over records that allows dynamic updates and refinements is needed, instead of simply partitioning records into disjoint fixed blocks.

Further, we pinpoint that the decoupled matching and clustering phases actually optimize the same objective using complementary signals from distinct sources.
To be specific, the matching refines the edges (graph topology) via pairwise queries to a matcher, whilst the clustering step infers missing connections and resolves inconsistencies by enforcing transitive consistency (the graph-based logic) over the detected edges.
Both of them maximize the amount of true matches (positive edges) within entity clusters and the amount of true non-matches (non-edges) across entity clusters~\cite{vesdapunt2014crowdsourcing,bansal2004correlation}, which formally optimizes
\begin{small}
\begin{equation}\label{eq:obj-cluster}
\max_{\delta}\sum_{r_i,r_j\in \mathcal{R}}{w^{+}_{i,j}\cdot \delta_{i,j}} + \lambda\sum_{r_i,r_j\in \mathcal{R}}{w^{-}_{i,j}\cdot (1-\delta_{i,j})}.
\end{equation}
\end{small}
$w^{+}_{i,j}=1$ (resp. $w^{-}_{i,j}=1$) if $(r_i,r_j)$ is (resp. not) a true match and $\delta_{i,j}=1$ if $r_i,r_j$ are assigned to the same cluster (connected via an edge), and $0$ otherwise. 
As remarked earlier in \S\ref{sec:BMC}, the ideal graph is a list of cliques, each of which is an entity cluster.
As such, the identification of edges/matches (graph refinement) is mathematically equivalent to refining the cluster labels of the records.

Notably, these processes can mutually reinforce one another. By exploiting graph topology (i.e., transitive relations), we can drastically reduce the number of expensive pairwise queries required by the matcher~\cite{benjelloun2009swoosh}. On the other hand, high-confidence links by the matcher enhance the graph quality and hence mitigate error propagation.
This motivates an adaptive framework that unifies the matching and clustering, i.e., graph and label refinement. 

\begin{figure}[!t]
    \centering
    \includegraphics[width=0.95\columnwidth]{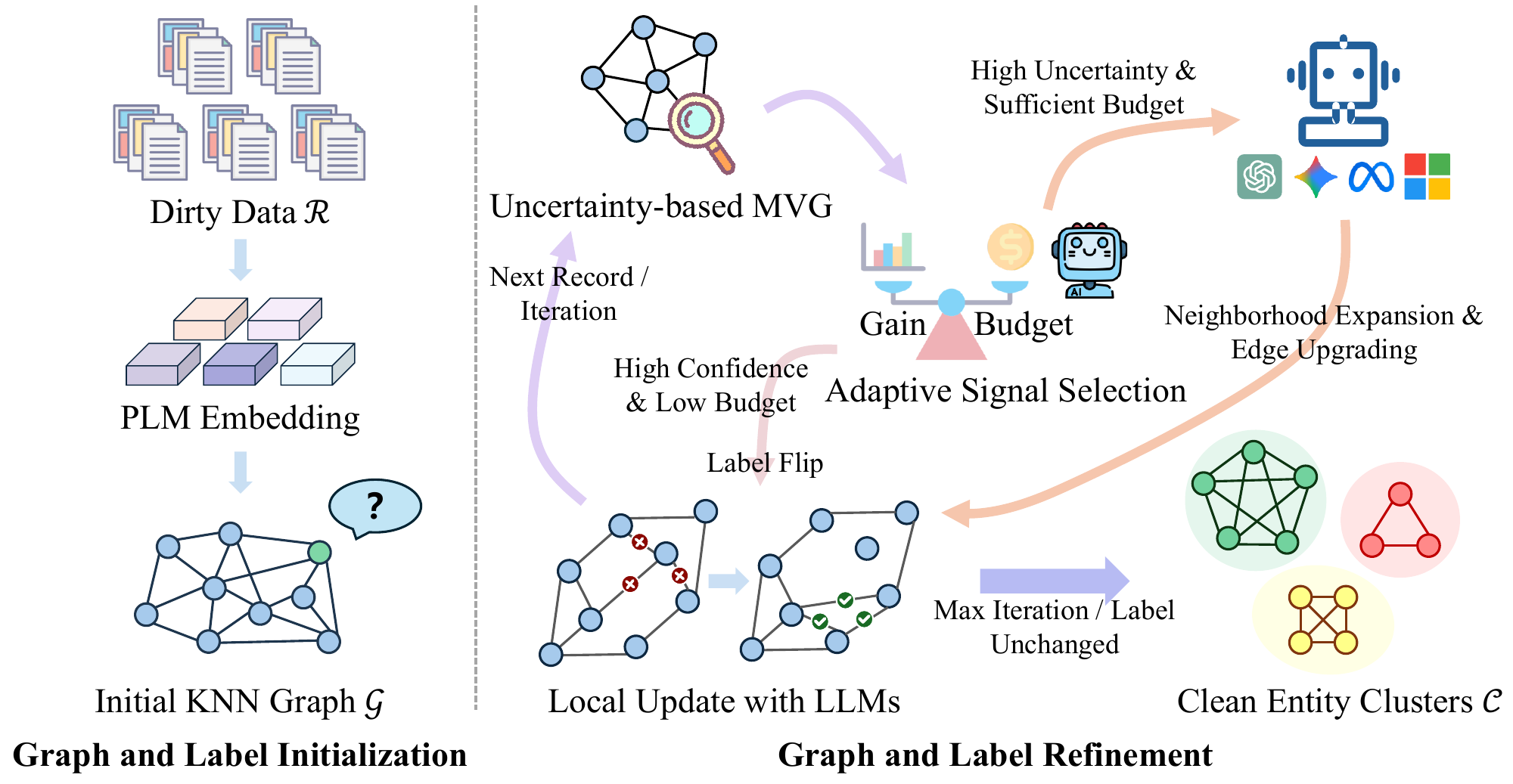} 
    \vspace{-3ex}
    \caption{An overview of \algo{}}
    \label{fig:framework}
\vspace{-2ex}
\end{figure}

\begin{lemma}\label{lem:LP}
Given a graph $\G$ of records, the {label propagation}~\cite{raghavan2007near} over $\G$ is equivalent to optimize Eq.~\eqref{eq:obj-cluster} with $\lambda=0$.
\end{lemma}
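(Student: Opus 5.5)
We read the statement in the standard way: the graph $\G$ plays the role of the positive evidence, so $w^{+}_{i,j}$ is replaced by the (possibly weighted) adjacency $A_{i,j}$ of $\G$. Each labelling $\ell:\mathcal{R}\to\mathbb{N}$ of records induces $\delta_{i,j}=\mathbb{1}[\ell(r_i)=\ell(r_j)]$. With $\lambda=0$, Eq.~\eqref{eq:obj-cluster} becomes
\[
J(\ell)=\sum_{r_i,r_j\in\mathcal{R}} A_{i,j}\,\mathbb{1}[\ell(r_i)=\ell(r_j)],
\]
the total weight of edges whose endpoints share a label. The claim to prove is that the label propagation of~\cite{raghavan2007near} is exactly a coordinate-ascent procedure on $J$, and that its fixed points (the LP consensus states) are precisely the coordinatewise local maxima of $J$.

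\textbf{Step 1: single-node decomposition.} Fix all labels except that of $r_i$. Only terms involving $i$ depend on $\ell(r_i)$. Using the symmetry of $A$, the change in $J$ when $r_i$ switches to label $c$ is
\[
J(\ell_{i\to c})=\text{const}+2\sum_{j\in\N(i)} A_{i,j}\,\mathbb{1}[\ell(r_j)=c].
\]
The weighted count of neighbours carrying label $c$ is exactly the quantity the LP update rule maximises: $\ell(r_i)\leftarrow\arg\max_c\sum_{j\in\N(i)}A_{i,j}\mathbb{1}[\ell(r_j)=c]$. Hence every LP update of a single node is an exact best-response step for $J$, so $J$ is non-decreasing.

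\textbf{Step 2: fixed points and termination.} Under asynchronous updates with a consistent tie-breaking rule, keep the current label whenever it is among the maximisers. Each label change then strictly increases $J$. Since $J$ takes finitely many values, LP terminates. Its stopping criterion is that every node holds a label of maximal neighbour weight, which is exactly the condition that no single-coordinate change can increase $J$. This establishes the equivalence of LP stable states with coordinatewise maximisers of $J$, i.e.\ LP optimises Eq.~\eqref{eq:obj-cluster} with $\lambda=0$.

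I would also remark that the global maximiser of $J$ with $\lambda=0$ is the trivial labelling of connected components (indeed a single label). This is why the negative term $\lambda>0$ is needed later, and why "equivalent to optimize" should be read as "LP performs local (coordinate) ascent on".

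\textbf{Expected obstacle.} The hardest part is being precise about the meaning of "equivalent". It has two pieces. First, synchronous LP can oscillate, for example on bipartite graphs, and so does not monotonically increase $J$. I would handle this by stating the result for the asynchronous variant, as recommended in~\cite{raghavan2007near}, and noting that it corresponds to block-coordinate ascent. Second, tie-breaking must be fixed so that the strict-increase argument gives termination.
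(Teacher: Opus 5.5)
Your proposal is correct and takes the same route as the paper's proof. The paper also notes that each label-propagation step at $r_i$ maximizes the node-wise term $\sum_{r_j\in\N(r_i)} w^{+}_{i,j}\delta_{i,j}$, and then sums over all nodes to recover Eq.~\eqref{eq:obj-cluster} with $\lambda=0$. Your version is more careful than the paper's, which implicitly identifies $w^{+}$ with the edges of $\G$ and jumps straight from per-node maximization to $\max_{\C}$ of the total. Your reading (symmetric $A$, i.e.\ undirected $\G$; asynchronous updates; fixed points as coordinatewise maxima; and the remark that the global maximizer is trivial) makes precise the only sense in which the claimed equivalence actually holds.
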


Inspired by the foregoing insights and Lemma~\ref{lem:LP}, our idea is to establish the \algo{} framework upon the graph-based {\em label propagation}~\cite{raghavan2007near}, wherein the graph structures and node labels are iteratively and jointly refined or optimized with complementary signals from the LLM-based matcher and graph neighborhoods.
Particularly, the weighted propagation of labels via neighbors enables the use of {\em soft transitivity} to infer ``easy'' matches probabilistically without query cost, which not only attenuates the large errors amplified by applying rigid clustering transitivity over imperfect graphs but also reserves the budget of expensive LLM queries for ``hard'' cases to correct the graph and labels.
Fig.~\ref{fig:framework} provides a figurative illustration of the idea underlying our \algo{} framework.

\subsection{Framework Overview}\label{sec:overview}

\algo{} consists of two stages, where the former focuses on initializing the graph and node labels, while the latter iteratively and adaptively refines the graph and labels, as outlined in Algo.~\ref{alg:ce_ppl}.

\stitle{Graph and Label Initialization} After taking as input the record set $\mathcal{R}$, total (monetary) budget $B$ for querying LLMs, the maximum number $T_\text{max}$ of iterations, coefficients $\alpha$ and $\theta$, \algo{} starts by assigning a unique label $\ell(r_i)$ to each record $r_i$ in $\mathcal{R}$ and generating its embedding vector $\rvec_i$ using a pre-trained language model (PLM) (Line 1).
Next, a $K$-nearest neighbor ($K$NN) graph $\G$ is approximately constructed~\cite{thirumuruganathan2021deep} over the embedding vectors $\{\rvec_i\}_{i=1}^n$ (Line 2), such that each record $r_i$ is associated with $K$ neighbors $\N(r_i)$.
Accordingly, the weight $w(r_i,r_j)$ of each edge $(r_i,r_j)$ in $\G$ is initialized as follows:
\begin{equation}\label{eq:weight-init}
w(r_i,r_j)\gets \alpha \cdot\textsf{cos}(\rvec_i,\rvec_j),
\end{equation}
i.e., the $\alpha$-scaled ($0<\alpha\le 1$) cosine similarity of the embedding vectors of $r_i$ and $r_j$ (Line 3). 
In the meantime, the consumed budget $\beta$ is initially set to be $0$ at Line 4. 

\begin{algorithm}[!t]
\footnotesize
\caption{The \algo{} framework}\label{alg:ce_ppl}
\KwIn{Record set $\mathcal{R}$, budget $B$, integer $T_\text{max}$, coefficients $\alpha$ and $\theta$}
\KwOut{Entity clusters $\mathcal{C} = \{\C_1, \C_2, \dots, \C_{|\C|}\}$.}
\Comment{Graph and Label Initialization}
$\ell(r_i)\gets i;\ \rvec_i\gets \textsf{PLM}(r_i)\ \forall{r_i\in \mathcal{R}}$\;
$\{\N(r_i)\}_{i=1}^n\gets K\text{NN}(\{\rvec_i\}_{i=1}^n)$\;
Initialize edge weights $w(r_i,r_j)\ \forall{(r_i,r_j)\in \G}$\tcp*{Eq.~\eqref{eq:weight-init}}
Initial consumed budget $\beta \gets 0$\;
\Comment{Graph and Label Refinement}
\Repeat{$t \ge T_\textnormal{max}$ \textnormal{\textbf{or}} $\{\ell(r_i)\}_{i=1}^n$ remain unchanged}{
    \For{$r_i \in \mathcal{R}$}{
        Update label distribution $\boldsymbol{\pi}_i$ \tcp*{Eq.~\eqref{eq:label-dist}}
        \If{\textnormal{Algo.~\ref{alg:llm-select} returns} {true}}{
            Invoke Algo.~\ref{alg:local-refinement} for $r_i$ and let $\omega_i$ be the actual cost\;
            $\beta \gets \beta + \omega_i$\;
        }
        \Else{
            $\iota^* \gets \argmax{\iota\in \mathcal{L}_i}{\boldsymbol{\pi}_i(\iota)}$\;
            \lIf{$\boldsymbol{\pi}_i(\iota^*) > \theta$}{ 
                $\ell(r_i) \gets \iota^*$;
            }
        }
    }
    $t \gets t + 1$\;
}
Convert $\{\ell(r_i)\}_{i=1}^n$ into entity clusters $\mathcal{C}$\;
\end{algorithm}

\stitle{Iterative Graph and Label Refinement} Afterwards, \algo{} proceeds to an iterative process to jointly refine $\G$ and $\{\ell(r_i)|\forall{r_i\in \mathcal{R}}\}$ based on the graph-based label propagation and pairwise queries to LLMs (Lines 5-16).
More specifically, in each iteration, for each record $r_i\in \mathcal{R}$, \algo{} first updates the label distribution $\boldsymbol{\pi}_i$ of $r_i$ w.r.t. its candidate labels $\mathcal{L}_i=\{\ell(r_j)|r_j\in \N(r_i)\}$ formed by its neighbors $\N(g_i)$ in $\G$ (Line 7). In mathematical terms, the probability of assigning label $\iota$ to record $r_i$ is calculated by 
\begin{small}
\begin{equation}\label{eq:label-dist}
    \boldsymbol{\pi}_i(\iota) = \frac{1}{\sum_{r_j \in \mathcal{N}(r_i)}{w(r_i,r_j)}}\cdot \sum_{r_j \in \mathcal{N}(r_i)} w(r_i,r_j) \cdot \mathbb{1}_{\ell(r_j)=\iota}
\end{equation}
\end{small}
where $\mathbb{1}_{\ell(r_j)=\iota}$ is $1$ if $\ell(r_j)=\iota$ and $0$ otherwise. 

Based on the evaluation in Algo.~\ref{alg:llm-select} with the label distribution $\boldsymbol{\pi}_i$ and available query budget, \algo{} then adaptively determines the use of {\em weak and free signals} (from label propagation over $\G$) or {\em strong but expensive signals} (from pairwise queries to LLMs) for the refinement of $r_i$'s label $\ell(r_i)$ and its neighborhood $\N(r_i)$ in $\G$. 
This design prioritizes the allocation of query budget to records at decision boundaries, while simple tasks can be cheaply done via weighted label propagation.
Particularly, the subsequent operation is either one of the following two:
\begin{itemize}[leftmargin=*]
\item \textbf{Local Update with LLMs} (Lines 8-10): if Algo.~\ref{alg:llm-select} returns a positive answer, i.e., it is affordable and cost-effective to issue the pairwise queries for $r_i$, \algo{} will query the LLM, and update $r_i$'s label and neighbors based on its response accordingly by invoking the \texttt{LocalUpdate} algorithm (Algo.~\ref{alg:local-refinement});
\item \textbf{Weighted Label Propagation} (Lines 11-13): if Algo.~\ref{alg:llm-select} returns false, implying that $r_i$ already has a confident label or the remaining budget $B-\beta$ is insufficient/depleted, \algo{} simply takes the most confident label $\iota^\ast$ among $r_i$'s neighbors as its new label if the confidence $\boldsymbol{\pi}_i(\iota^\ast)$ is beyond the threshold $\theta$.
\end{itemize}
The iterative refinement repeats for all records until convergence (i.e., the labels of all records remain invariant) or maximum iterations are reached.
Finally, the eventual labels are then converted into a set of entity clusters at Line 16.

\subsection{Adaptive Signal Selection}\label{sec:signal-select}

\stitle{Optimization Objective}
For each target record $r_i$ at the  $t$-th iteration, the goal is to make an immediate and irrevocable decision on whether to invest a small budget $\omega_i$ out of $B$ to call LLMs for strong signals or directly leverage cost-free graph signals for updating $r_i$'s neighborhood and label. Particularly, the iterative refinement for all records can be perceived as a dynamic resource allocation, which involves a sequence of binary decision makings.
This optimization process can be framed as an \textit{Online Knapsack Problem}~\cite{chakrabarty2008online}, where the knapsack capacity is the total budget $B$.
More precisely, the optimization objective is formulated as follows:
\begin{small}
\begin{equation}
    \max_{\chi} \sum_{t=1}^{T_\textnormal{max}} \sum_{r_i \in \mathcal{R}} \chi_{i,t} \cdot g_i \quad \text{s.t.} \sum_{t=1}^{T_\textnormal{max}} \sum_{r_i \in \mathcal{R}} \chi_{i,t} \cdot \omega_i \le B,
\label{eq:okp_obj}
\end{equation}
\end{small}
where $\chi_{i,t} \in \{0, 1\}$ is the binary decision variable, indicating an invocation of the LLM for $r_i$ at the $t$-th iteration if $\chi_{i,t}=1$, and $g_i$ stands for the \textit{marginal value gain} (MVG) for record $r_i$. 
In brief, the overall optimization goal is to maximize the cumulative marginal gain subject to the budget constraint $B$.

\begin{algorithm}[!t]
\footnotesize
\caption{Adaptive Signal Selection}\label{alg:llm-select}
\KwIn{Target record $r_i$, total budget $B$, consumed budget $\beta$}

Calculate $\Delta^\text{\scriptsize{WLP}}_i$ \tcp*{Eq.~\eqref{eq:graph_confidence}}

Estimate the cost $\omega_i$ of querying the LLM with $r_i$\;

Calculate the marginal value gain $g_i$\tcp*{Eq.~\eqref{eq:MVG}}

\Return \textbf{true} if Inequality~\eqref{eq:use-LLM} holds and \textbf{false} otherwise\;

\end{algorithm}

\stitle{Uncertainty-based MVG}
We define the MVG $g_i$ as the expected reduction in uncertainty provided by the LLM over the weighted label propagation (WLP) in terms of updating $r_i$'s label:
\begin{equation}\label{eq:MVG}
    g_i \gets \max\left(0, \Delta^\text{\scriptsize{LLM}}_i - \Delta^\text{\scriptsize{WLP}}_i\right),
\end{equation}
where $\Delta^\text{\scriptsize{LLM}}_i$ and $\Delta^\text{\scriptsize{WLP}}_i$ symbolize the confidence/certainty for this update using two sources of signals, respectively.
When $g_i=0$, the WLP without incurring LLM query cost is deemed adequate or superior in inferring $r_i$'s new label, whereas $g_i > 0$ indicates that investing part of the budget for this positive gain {\em might} be a choice.

Next, we propose to formulate $\Delta^\text{\scriptsize{WLP}}_i$ based on the label distribution $\boldsymbol{\pi}_i$ of $r_i$ and the {\em label perplexity} in Definition~\ref{def:LP}.
\begin{definition}[Label Perplexity]\label{def:LP} Given a record $r_i$ and its label distribution $\boldsymbol{\pi}_i$, the {\em label perplexity} of $r_i$ is defined by $\rho(r_i) = \exp\left( - \sum_{\iota \in \mathcal{L}_i} \boldsymbol{\pi}_i(\iota)\cdot \ln \boldsymbol{\pi}_i(\iota) \right)$.
\end{definition}
Note that $\rho(r_i)\in [1,|\mathcal{L}_i|]$ is formulated upon Shannon entropy~\cite{shannon1948mathematical}, which can measure the overall certainty about $r_i$'s all possible labels. Intuitively, a high $\rho(r_i)$ connotes that the predicted probabilities in $\boldsymbol{\pi}_i$ are evenly distributed among all possible labels, making it hard to assign a label to $r_i$ confidently.
To quantify the confidence/certainty of assigning the best label based on $\boldsymbol{\pi}_i$ to $r_i$ in WLP (Lines 13-14 in Algo.~\ref{alg:ce_ppl}), \algo{} calculates $\Delta^\text{\scriptsize{WLP}}_i$ by
\begin{equation}
    \Delta^\text{\scriptsize{WLP}}_i \gets \left(1 - \log_{b_i}{\rho(r_i)}\right)\cdot \max_{\iota\in \mathcal{L}_i} \boldsymbol{\pi}_i(\iota),
\label{eq:graph_confidence}
\end{equation}
where $b_i = \max(|\mathcal{L}_i|, 2)$ is used for normalization, i.e., ensuring $0\le \log_{b_i}{\rho(r_i)}\le 1$.

Since $\Delta^\text{\scriptsize{LLM}}_i$ is dependent on the capacity of adopted LLMs, we set it to $0.95$ by default. Practically, a preliminary estimation of $\Delta^\text{\scriptsize{LLM}}_i$ can be done with a small test set, as in \cite{li2024leveraging, huang2025thriftllm}.

\stitle{Algorithm and Analysis}
Algo.~\ref{alg:llm-select} displays the pseudo-code for solving the optimization problem in Eq.~\eqref{eq:okp_obj}.
More concretely, Algo.~\ref{alg:llm-select} first calculates $\Delta^\text{\scriptsize{WLP}}_i$ according to Eq.~\eqref{eq:graph_confidence} at Line 1, and at Line 2 estimates the cost $\omega_i$ of querying the LLM for target record $r_i$ based on the number of tokens needed and the LLM API pricing using Eq.~\eqref{eq:API-cost}.
Accordingly, the MVG of $r_i$ is then obtained by Eq.~\eqref{eq:MVG} (Line 3).
At last, Algo.~\ref{alg:llm-select} returns true to indicate the selection of LLM signals for $r_i$, if the following inequalities are satisfied:
\begin{small}
\begin{equation}\label{eq:use-LLM}
\frac{g_i}{\omega_i} \ge \frac{L}{e} \left( \frac{U \cdot e}{L} \right)^{\frac{\beta}{B}}\ \text{and}\ \beta + \omega_i \le B,
\end{equation}
\end{small}
and false otherwise (Line 4). 
Note that $\frac{g_i}{\omega_i}$ denotes the marginal gain per unit cost, which is assumed to lie within a known range $[L, U]$.
The above conditions indicate that the LLM queries are allowed if the remaining budget is sufficient, and meanwhile, the per-cost marginal gain exceeds a particular threshold, which increases exponentially as the budget depletes and ensures that scarce resources are reserved for increasingly high-value LLM signals later.
This deterministic strategy is inspired by~\cite{chakrabarty2008online}, and offers a provably optimal competitive ratio as stated in the following theorem:

\begin{theorem}\label{lem:comp-ratio}
Algo.~\ref{alg:llm-select} achieves a competitive ratio $\ln(U/L) + 1$ for the total marginal gain.
\end{theorem}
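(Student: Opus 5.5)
We follow the standard primal--dual/threshold argument of Chakrabarty et al.~\cite{chakrabarty2008online} for the online knapsack problem. The precondition is the infinitesimal-weight assumption: each $\omega_i$ is small relative to $B$, so the budget is consumed essentially continuously. Index the decisions in Eq.~\eqref{eq:okp_obj} by the pairs $(i,t)$ in their processing order. Let $z=\beta/B\in[0,1]$ be the consumed fraction of budget when a given item arrives. The threshold is
\[
\Psi(z)=\frac{L}{e}\left(\frac{Ue}{L}\right)^{z},
\]
which satisfies $\Psi(0)=L/e$ and $\Psi(1)=U$. Let $\mathcal{A}$ be the set of items accepted by Algo.~\ref{alg:llm-select}, let $\textsf{ALG}=\sum_{(i,t)\in\mathcal{A}} g_i$, and let $\textsf{OPT}$ be the offline optimum of Eq.~\eqref{eq:okp_obj} on the same sequence. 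The $g_i$ are treated as fixed when the item arrives. Let $Z$ denote the final fraction of budget used. The goal is to show $\textsf{OPT}\le(\ln(U/L)+1)\,\textsf{ALG}$.

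\textbf{Step 1 (lower bound on ALG).} Every accepted item satisfies $g_i\ge \omega_i\Psi(\beta/B)$. Summing over accepted items gives a Riemann sum that approaches an integral in the small-weight limit:
\[
\textsf{ALG}\ \ge\ B\int_0^{Z}\Psi(z)\,dz .
\]
For $Z\ge z_0:=1/(1+\ln(U/L))$, where $\Psi(z_0)=L$, this integral evaluates to $B\,\frac{\Psi(Z)-\Psi(0)}{\ln(Ue/L)}$, since $\ln(Ue/L)=1+\ln(U/L)$.

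\textbf{Step 2 (upper bound on OPT).} Split OPT into the items it shares with ALG and the rest. Any item taken by OPT but rejected by ALG either had $g_i/\omega_i<\Psi(\beta/B)\le\Psi(Z)$, or was rejected because of budget exhaustion. In the second case the consumed fraction is about $1$, so $\Psi(Z)\approx U$, which upper-bounds every ratio anyway. Hence OPT's rejected-by-ALG items contribute at most $\Psi(Z)\cdot B$, and the shared items contribute at most $\textsf{ALG}$. This gives
\[
\textsf{OPT}\le \textsf{ALG}+B\,\Psi(Z).
\]

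\textbf{Step 3 (combine).} It remains to show
\[
B\,\Psi(Z)\le \ln(U/L)\cdot\textsf{ALG}
\]
in both regimes.

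If $Z\ge z_0$, Step 1 gives
\[
\textsf{ALG}\ge B\,\frac{\Psi(Z)-L/e}{1+\ln(U/L)}.
\]
A cleaner route is the sharper accounting $\textsf{OPT}\le \textsf{ALG}+B\Psi(Z)(1-Z)$, obtained by charging only the leftover capacity. Combining this with the identity $\int_0^Z\Psi=\frac{\Psi(Z)-\Psi(0)}{\ln(Ue/L)}$ yields $\textsf{OPT}/\textsf{ALG}\le \ln(U/L)+1$. The specific base $Ue/L$ and offset $L/e$ of $\Psi$ are chosen exactly so that this ratio is equalized; this is the differential-equation characterization of the threshold in~\cite{chakrabarty2008online}.

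If $Z<z_0$, then $\Psi(Z)<L$. In this regime every item is accepted whenever its ratio is at least $L$, and every item has ratio at least $L$ by assumption. So ALG accepts every arriving item, and therefore $\textsf{ALG}\ge\textsf{OPT}$.

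\textbf{Main obstacle.} The delicate step is the algebra in Step 3. The cruder bound $\textsf{OPT}\le\textsf{ALG}+B\Psi(Z)$ loses a constant. I would first try the per-item primal--dual accounting: set the dual variable to $\Psi(Z)$, show the dual is feasible, and show that each accepted item increases the dual objective by at most $(1+\ln(U/L))$ times its gain. Concretely, one checks the differential inequality $B\,\Psi'(z)\,dz+\Psi(z)\,B\,dz\le(1+\ln(U/L))\,\Psi(z)\,B\,dz$. This holds with equality because $\Psi'/\Psi=\ln(Ue/L)$. Weak duality then gives the claimed ratio. I would also state explicitly, as an assumption inherited from the cited analysis, the small-weight condition $\omega_i\ll B$ and the fact that the $g_i$ are fixed at arrival.
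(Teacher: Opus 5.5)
Your argument does not close. The paper's proof only identifies the acceptance rule with the threshold $\Upsilon(z)=\frac{L}{e}(Ue/L)^z$ of Chakrabarty et al.\ and invokes their Theorem~2.1 under $\omega_i\ll B$. Had you stopped at that citation you would match it. The derivation you add to pin down the constant, however, breaks in three places.

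(i) Your ``sharper accounting'' $\textsf{OPT}\le\textsf{ALG}+B\Psi(Z)(1-Z)$ is false: at $Z=1$ it asserts $\textsf{OPT}\le\textsf{ALG}$. Take the following instance:
\begin{itemize}
\item mass $z_0B$ at density $L$;
\item then mass at density exactly $\Psi(z)$ for $z\in[z_0,1]$, all accepted, so the budget is exhausted;
\item then mass $B$ at density $U$, all rejected.
\end{itemize}
Here $\textsf{OPT}\approx UB$ but $\textsf{ALG}\approx UB/(1+\ln(U/L))$. Charging only the leftover capacity would need the items of $\textsf{ALG}\setminus\textsf{OPT}$ to have density at least $\Psi(Z)$. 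They are only guaranteed density at least $\Psi(z_i)\le\Psi(Z)$.

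(ii) Your differential check is an arithmetic slip. Since $\Psi'=(1+\ln(U/L))\Psi$, the left side $B\Psi'\,dz+\Psi B\,dz$ equals $(2+\ln(U/L))\Psi B\,dz$. This strictly exceeds the right side, so the inequality fails rather than holding with equality.

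(iii) Step~1's bound $\textsf{ALG}\ge B\int_0^Z\Psi$ discards the fact that items accepted while $\Psi(z)<L$ have density at least $L$. The resulting $-\Psi(0)=-L/e$ term makes the exact constant unreachable from that bound.

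The missing ingredient is to analyze with the effective threshold $\tilde\Psi(z)=\max\{L,\Psi(z)\}$. It yields identical decisions, since every density is at least $L$. For $Z\ge z_0$ it gives $\textsf{ALG}\ge B\int_0^Z\tilde\Psi=B\bigl(Lz_0+\frac{\Psi(Z)-L}{1+\ln(U/L)}\bigr)=\frac{B\Psi(Z)}{1+\ln(U/L)}$.

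For $\textsf{OPT}$, let $P$ and $W$ be the value and weight of $\textsf{OPT}\cap\textsf{ALG}$, and let $b$ be the value of $\textsf{ALG}\setminus\textsf{OPT}$. Then $\textsf{OPT}\le P+\Psi(Z)(B-W)$ and $\textsf{ALG}=P+b$. When the ratio $\frac{P+\Psi(Z)(B-W)}{P+b}$ exceeds $1$, it is decreasing in $P$. So you may replace $P$ by its lower bound $\sum_{\textsf{OPT}\cap\textsf{ALG}}\omega_i\tilde\Psi(z_i)\le\Psi(Z)W$. The numerator becomes at most $B\Psi(Z)$ and the denominator at least $B\int_0^Z\tilde\Psi$, hence $\textsf{OPT}/\textsf{ALG}\le 1+\ln(U/L)$.

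Equivalently, in primal--dual form, take $\lambda=\Psi(Z)$ and $y_i=g_i-\omega_i\tilde\Psi(z_i)$ on accepted items (note the minus sign, absent from your version). The dual value is then $B\Psi(Z)+\textsf{ALG}-B\int_0^Z\tilde\Psi\le(1+\ln(U/L))\,\textsf{ALG}$. Your case $Z<z_0$ is fine as written.
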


\begin{algorithm}[!t]
\footnotesize
\caption{Local Update with LLMs}\label{alg:local-refinement}
\KwIn{Target record $r_i$}
\KwOut{The actual query cost $\omega_i$}
Pick the top-$m$ candidate labels $\mathcal{L}^{\ast}_i$ by $\boldsymbol{\pi}_i$\tcp*{Eq.~\eqref{eq:m-labels}}
$\mathcal{S}_i\gets \emptyset$\;
\For{$\iota\in \mathcal{L}^{\ast}_i$}{
Pick $r_i$'s nearest record $r_j$ in cluster $\C_{\iota}$\tcp*{Eq.~\eqref{eq:cluster-repre}}
$\mathcal{S}_i\gets \mathcal{S}_i\cup \{r_j\}$\;
}
$r\gets \textsf{LLM}(\mathcal{P}, r_i, \mathcal{S}_i)$ and compute the actual query cost $\omega_i$\;
\If{$r \neq \textnormal{none}$}{
        $\ell(r_i) \gets \ell(r)$\;
        \For{$r_j \in \C_{\ell(r)}$}{
            $\mathcal{N}(r_i) \leftarrow \mathcal{N}(r_i) \cup \{r_j\};\ \mathcal{N}(r_j) \leftarrow \mathcal{N}(r_j) \cup \{r_i\}$\;
            $w(r_i, r_j) \gets \sigma_\text{LLM}$\;
        }
                
            }
\lElse{$\forall{r_k \in \mathcal{S}_i}\ \N(r_i) \gets \N(r_i) \setminus \{r_{k}\}$; $\N(r_{k}) \gets \N(r_{k}) \setminus \{r_i\}$}

\end{algorithm}

\subsection{Local Update with LLMs}\label{sec:local-update}
Given a target record $r_i$, the module of local update with LLMs seeks to update $r_i$'s label and neighborhoods based on the response from the LLM.
The pseudo-code is illustrated in Algo.~\ref{alg:local-refinement}.
Firstly, Algo.~\ref{alg:local-refinement} focuses on selecting a set $\mathcal{S}_i$ of $m$ (typically $3$) candidate records with distinct labels for querying the LLMs at Lines 1-5.
To be specific, at Line 1, we identify the top-$m$ candidate labels from the label set $\mathcal{L}_i=\{\ell(r_j)|r_j\in \N(r_i)\}$ of $r_i$'s neighbors by
\begin{equation}\label{eq:m-labels}
\mathcal{L}^{\ast}_i \gets \underset{\iota\in \mathcal{L}_i}{{\arg\text{top-}m}}\ {\boldsymbol{\pi}_i(c)}.
\end{equation}
Based thereon, for each of the $m$ candidate labels $\iota$, we retrieve the corresponding cluster $\C_\iota$ that currently contains all records with label $\iota$, and find $r_i$'s nearest record $r_j$ in $\C_\iota$ (Line 3-4):
\begin{equation}\label{eq:cluster-repre}
r_j \gets \argmax{r_\ell\in \C_{\iota}}{\textsf{cos}(\rvec_i,\rvec_\ell)}
\end{equation}
Intuitively, $r_j$ can be interpreted as a representative of cluster $\C_\iota$, connoting the likelihood that $r_i$ joins $\C_\iota$.

Let $\mathcal{S}_i$ be the set of $m$ such representative records. Algo.~\ref{alg:local-refinement} then asks the LLM to choose a record $r$ from $\mathcal{S}_i$ such that $r_i$ and $r$ refers to the same entity: $r\gets\textsf{LLM}(\mathcal{P}, r_i, \mathcal{S}_i)$,
where $\mathcal{P}$ describes the task instruction for this multiple-choice question. Based thereon, we compute the actual cost $\omega_i$ for querying the LLM at Line 6. Due to space limits, we present the detailed prompt templates in Appendix~\ref{app:prompt}.
If the response $r$ by the LLM is a valid record, \algo{} considers $(r_i,r)$ as a positive match with a high confidence of $\sigma_\text{LLM}$, given the remarkable semantic understanding abilities of LLMs.
In turn, we update $r_i$'s label to $r$'s label at Line 8, and then proceed to refine its neighbors through symmetric {\em neighborhood expansion} and {\em edge upgrading}.
The former step expands $r_i$'s neighbors by establishing edges between $r_i$ and every record $r_j$ in $\C_{\ell(r)}$ (by positive transitivity), i.e., all the records with label $\ell(r)$ (Line 10), while the latter is to increase the weight of these links to a high value of $\sigma_\text{LLM}$ as they are directly or indirectly verified by LLMs (Line 11).
On the contrary, if the LLM returns an invalid $r$, which signifies that $r_i$ refers to an entity different from the records in $\mathcal{S}_i$, Algo.~\ref{alg:local-refinement} will remove their edges with $r_i$ (Line 12).

\subsection{Complexity Analyses}\label{sec:analysis}
We analyze the efficiency of \algo{} in terms of computation and monetary cost. Detailed derivations are provided in Appendix~\ref{app:complexity}.

\stitle{Computational Complexity}
The time complexity consists of graph initialization and iterative refinement. Initialization is dominated by the construction of the $K$NN graph, taking $O(n \log n)$. 
During refinement, \algo{} dynamically densifies the graph by neighborhood expansion (Algo.~\ref{alg:local-refinement}).
Let $\psi_{\text{max}}$ be the maximum size of the identified entity clusters during the refinement. 
The overall worst-case time complexity is thus bounded by $O(n \log n + T_\textnormal{max} \cdot n \cdot \psi_{\text{max}})$. 
Notice that although noise might slightly inflate clusters, $\psi_{\text{max}}$ remains comparable to the ground truth cluster size in practice, satisfying the \textit{sparse truth} assumption~\cite{betancourt2016flexible,draisbach2019transforming} where $\psi_{\text{max}} \ll N$.

\stitle{Monetary Cost}
Since \algo{} treats monetary cost as a hard constraint, the optimization in Algo.~\ref{alg:llm-select} ensures that the cumulative cost of LLM queries strictly satisfies $\sum \omega_i \le B$.

%% file: tex/experiments.tex
\section{Experiments}

This section experimentally evaluates the overall effectiveness and cost-effectiveness, the contribution of each component (e.g., adaptive signal selection, graph refinement), and the robustness of the proposed \algo framework. 
The source code and datasets are publicly accessible at \url{https://github.com/HKBU-LAGAS/Alper}. 

\subsection{Experimental Setup}
\input{figures/dataset}

Due to space constraints, we defer the detailed experimental setup, including dataset statistics, baseline descriptions, evaluation metrics, cost calculations, implementation details, and hyperparameter settings, to Appendix~\ref{sec:appendix_exp_setup}. Furthermore, the appendix presents extensive supplementary evaluations, such as hyperparameter sensitivity analyses (e.g., impact of blocking size $K$), 
and additional comparisons against supervised baselines.

\stitle{Datasets}
We conduct extensive experiments on eight diverse real-world dirty ER benchmarks~\cite{nikoletos2022pyjedai,papadakis2018return,saeedi2017comparative,crescenzi2021alaska}, as detailed in Table~\ref{tab:datasets}. These datasets span a broad spectrum of both scale, ranging from small ($n=841$ on \textit{Census}) to large ($n>50$k on \textit{Movies}), and structural topology, specifically, the entity dispersion ($E_d=n/|\C|$)~\cite{chen2005exploiting}, from low-dispersion scenarios like \textit{Amazon-GP} ($E_d \approx 1.34$) to high-density clusters on \textit{Cora} ($E_d \approx 11.56$).

\stitle{Baselines}
We benchmark \algo against five state-of-the-art baselines, categorized into two groups:
(1) \textit{Unsupervised and self-supervised approaches}, including \texttt{ZeroER}~\cite{wu2020zeroer} and \texttt{CollaborEM}~\cite{ge2021collaborem}, which do not rely on extensive human-labeled data; and
(2) \textit{LLM-based frameworks}, namely \texttt{BatchER}~\cite{fan2024cost}, \texttt{ComEM}~\cite{wang2025match}, and the state-of-the-art \texttt{LLM-CER}~\cite{fu2025context}.

\stitle{Metrics and Implementations}
To comprehensively assess the quality of the entity clusters, we employ two widely recognized metrics: \textit{FP-measure} (FP) and \textit{normalized mutual information} (NMI).
We also evaluate the \textit{monetary cost} incurred by each method.
All experiments are repeated three times, and we report the average values. To ensure a fair comparison, all methods use the same blocking method and parameters, and all LLM-based competitors utilize the same backbone model (e.g., \texttt{GPT-5-Mini}) and operate under identical monetary budget constraints as \algo.

\input{figures/effectiveness}

\input{figures/query-cost}

\subsection{Evaluation of ER Performance}

Table~\ref{tab:overall_performance_cost_effectiveness} presents a comprehensive comparison of \algo{} against state-of-the-art baselines in terms of both clustering quality (FP-measure, NMI) and resource consumption (token usage, API calls, and monetary cost).
We summarize the following key observations.

First, \algo{} consistently achieves the highest FP and NMI scores across all eight benchmarks, significantly outperforming both self-supervised methods (e.g., \texttt{CollaborEM}) and advanced LLM-based frameworks.
This advantage is most prominent on datasets with high entity dispersion or complex ambiguities. For instance, on \textit{Song} and \textit{Census}, \algo{} improves the FP-measure by substantial margins of $11.38\%$ and $10.59\%$ over the strongest competitor, \texttt{LLM-CER}.
Similarly, on the challenging \textit{Movies} dataset, \algo{} surpasses the runner-up by $6.05\%$ in FP and $8.40\%$ in NMI.
These results validate that \algo's core mechanism, synergizing transitive label propagation with adaptive LLM verification, successfully recovers missing links that static blocking methods (used by baselines) discard.

Second, a critical insight from Table~\ref{tab:overall_performance_cost_effectiveness} lies in the behavior of \texttt{LLM-CER}.
On datasets like \textit{Census}, \textit{Cora}, and \textit{Amazon-GP}, \texttt{LLM-CER} consumes less than the allocated budget (e.g., spending only $\$0.32$ out of $\$0.50$ on \textit{Census}).
This underutilization is due to the number of API calls for \texttt{LLM-CER} being strictly bounded by the initial static blocking and early merging decisions. Once these static candidates are exhausted, it essentially ``starves'', unable to utilize the remaining budget to find additional matches.
In contrast, \algo{} dynamically utilizes the available budget with graph structures, transforming potential unspent funds into tangible recall gains.

Another important observation is that despite exploring a larger search space, \algo{} demonstrates remarkable cost-effectiveness.
Compared to \texttt{BatchER}, which relies on exhaustive pairwise verification, \algo{} frequently requires fewer API calls to achieve higher accuracy (e.g., on \textit{Movies} and \textit{Song}), proving that our uncertainty-based signal selection effectively targets high-value queries.
Even when \algo{} incurs slightly higher costs than the ``starved'' \texttt{LLM-CER}, the return on investment is disproportionately high. For example, on \textit{AS}, a modest increase in token usage yields a massive $10.05\%$ improvement in NMI.
Moreover, on larger datasets such as \textit{Music} and \textit{Movies}, \algo{} achieves the best performance while incurring comparable token costs to \texttt{LLM-CER}.

\subsection{Performance when Varying Query Budgets}
To further investigate the scalability and robustness of \algo{}, we evaluate the FP-measure trends on \textit{Cora}, \textit{Alaska}, \textit{Music}, and \textit{Movies} as the query budget $B$ increases.
As illustrated in Figure~\ref{fig:query_results}, two distinct behaviors emerge, underscoring that \algo's superiority stems from \textit{dynamic structural discovery} rather than simply increasing query volume.
(1) The performance of baselines, particularly \texttt{BatchER} and \texttt{LLM-CER}, tends to plateau or exhibit diminishing returns even as the budget expands.
For instance, on \textit{Cora}, \texttt{LLM-CER} saturates at an FP of $\approx 84.5\%$ despite a budget increase from $0.3$ to $0.5$.
This confirms the existence of a \textit{recall ceiling}.
Meanwhile, \texttt{BatchER} necessitates a significantly larger budget for pair comparisons to attain comparable performance.
Simply increasing the budget fails to improve performance because these methods strictly operate within the closed set of pre-filtered candidates.
(2) In sharp contrast, \algo{} maintains a steep upward trajectory across all datasets.
On the \textit{Movies} dataset (Fig.~\ref{fig:res-movies}), \algo{} improves FP from $50.34\%$ to $64.68\%$ as the budget increases.
Crucially, this is not merely a result of ``buying'' more matches. Even under tight budgets (e.g., $B=0.4$ on \textit{Movies}), \algo{} already surpasses \texttt{BatchER} and \texttt{ComEM}, demonstrating superior query efficiency.
As the budget grows, \algo{} leverages its transitive label propagation to dynamically uncover \textit{new} candidate pairs that were initially blocked, effectively converting the additional budget into structural repairs that static baselines are architecturally incapable of performing.

\input{figures/ablation}

\input{figures/llms}

\subsection{Ablation Study and Component Analysis}
\label{sec:ablation}

Table~\ref{tab:merged-analysis} validates the contribution of \algo's key components and the robustness of its strategies.
First, regarding model components, removing either weighted label propagation (\texttt{w/o WLP}) or the active LLM module (\texttt{w/o LLM}) causes significant degradation. For instance, on the complex \textit{Movies} dataset, the full model outperforms both single-component variants by over 6\% in FP. This confirms that \algo succeeds by synergizing weak structural signals with strong semantic reasoning, breaking the cost-accuracy trade-off.
Second, for signal selection, our adaptive strategy consistently outperforms \texttt{Greedy} and \texttt{Random} strategies (e.g., +5.9\% FP over \texttt{Greedy} on \textit{Movies}). This indicates that blindly prioritizing high-uncertainty nodes (\texttt{Greedy}) wastes budget on hard but low-value cases, whereas our value-density-based approach effectively maximizes marginal gain.
Third, \algo proves robust to various graph initialization schemes (\texttt{LSH}, \texttt{Canopy}, \texttt{Sparkly}). While performance remains stable on simpler datasets like \textit{Cora}, our default semantic blocking (\texttt{SBERT+KNN}) provides a superior topological foundation for challenging tasks like \textit{Movies}, allowing the refinement budget to focus on subtle ambiguities. Finally, evaluating \algo with diverse LLMs, including \texttt{GPT-5-Mini}, \texttt{GPT-4o-Mini}, \texttt{Phi-4}, \texttt{DeepSeek-R1}, and \texttt{Llama-3.3-70B}, under a unified budget reveals consistent competitiveness (Fig.~\ref{fig:llm_comparison_new}). A distinct trade-off is observed wherein cheap models facilitate extensive querying at the risk of error propagation, whereas expensive models provide superior reasoning but rapidly deplete the budget, thereby constraining the graph refinement scope.

\subsection{Parameter Analysis}

We analyze the impact of three core hyperparameters: the edge weight scaling factor $\alpha$, the label propagation confidence threshold $\theta$, and the number of candidate matches $m$ for LLM verification.
As shown in Fig.~\ref{fig:hyperparameter_alpha}, varying $\alpha$ reveals that \algo performs consistently well across a broad range (e.g., $\alpha \in [0.6, 1.0]$). While performance dips slightly at lower values on \textit{Music}, it remains robust on \textit{Cora} and \textit{Alaska}.
Fig.~\ref{fig:hyperparameter_theta} indicates that the threshold $\theta$ for weak LP requires a balance; extremely low values may propagate noise, while values that are too high reduce effectiveness. However, the performance is generally stable around $\theta=0.6$, suggesting \algo is not overly sensitive to precise tuning.
Finally, regarding the candidate set size $m$ in Fig.~\ref{fig:hyperparameter_m}, we observe a performance gain as $m$ increases from 1 to 3, effectively plateauing thereafter.

\input{figures/parameters}

%% file: figures/dataset.tex
\begin{table}[!t]
\centering
\footnotesize
\caption{Dataset statistics.}
\label{tab:datasets}
\vspace{-3ex}
\renewcommand{\arraystretch}{0.95} 
\setlength{\tabcolsep}{4pt} 
\resizebox{0.9\columnwidth}{!}{
\begin{tabular}{l|c|c|c|c|c}
\hline
\textbf{Dataset} & \textbf{\#Records} & \textbf{\#Entities} & \textbf{\#Matches} & \textbf{Domain} & $\mathbf{E_d}$ \\ \hline
\textit{Census}    & 841    & 503    & 344     & Demo     & 1.67 \\
\textit{Cora}      & 1,295  & 112    & 17,184  & Citation & 11.56 \\
\textit{AS}        & 2,260  & 330    & 32,816  & Geo      & 6.85 \\
\textit{Amazon-GP} & 4,393  & 3,289  & 1,104   & Software & 1.34 \\
\textit{Song}      & 4,854  & 1,195  & 8,588   & Music    & 4.06 \\
\textit{Alaska}    & 12,007 & 1,484  & 144,440 & Product  & 8.09 \\
\textit{Music}     & 19,375 & 10,000 & 16,250  & Music    & 1.94 \\
\textit{Movies}    & 50,797 & 27,934 & 22,863  & Movie    & 1.82 \\
\hline
\end{tabular}
}
\vspace{-2ex}
\end{table}

%% file: figures/effectiveness.tex
\begin{table}[!t]
\centering
\footnotesize
\caption{Overall Performance (FP and NMI) and Cost Effectiveness. (best \textbf{bolded} and runner-up \underline{underlined})}
\label{tab:overall_performance_cost_effectiveness}
\vspace{-2ex}
\setlength{\tabcolsep}{2pt}
\renewcommand{\arraystretch}{0.9}
\resizebox{\linewidth}{!}{
\begin{tabular}{l|c|cc|cccc}
\toprule
\textbf{Dataset} & \textbf{Metric} & \rotatebox[origin=c]{20}{\footnotesize\texttt{ZeroER}} & \rotatebox[origin=c]{20}{\footnotesize\texttt{CollaborEM}} & \rotatebox[origin=c]{20}{\footnotesize\texttt{BatchER}} & \rotatebox[origin=c]{20}{\footnotesize\texttt{ComEM}} & \rotatebox[origin=c]{20}{\footnotesize\texttt{LLM-CER}} & \rotatebox[origin=c]{20}{\footnotesize\algo{}} \\
\midrule

\multirow{5}{*}{\shortstack[l]{\textit{Census} \\ \scriptsize ($B=\$0.5$)}} 
 & FP \textuparrow  & 49.43 & 55.86 & 62.77 & 68.23 & \underline{71.56} & \textbf{82.15} \\ 
 & NMI \textuparrow & 62.63 & 74.51 & 76.89 & 82.87 & \underline{87.72} & \textbf{91.77} \\ 
 & Tokens (M) \textdownarrow & -- & -- & 0.21 & 0.20 & \textbf{0.13} & \underline{0.19} \\
 & Calls \textdownarrow & -- & -- & 649 & 687 & \textbf{393} & \underline{556} \\
 & Cost (\$) \textdownarrow & -- & -- & 0.50 & 0.50 & \textbf{0.32} & \underline{0.50} \\
\hline

\multirow{5}{*}{\shortstack[l]{\textit{Cora} \\ \scriptsize ($B=\$0.5$)}} 
 & FP \textuparrow  & 46.07 & 61.22 & 78.46 & 82.52 & \underline{84.51} & \textbf{86.34} \\ 
 & NMI \textuparrow & 58.85 & 76.91 & 81.12 & 85.28 & \underline{87.16} & \textbf{90.02} \\ 
 & Tokens (M) \textdownarrow & -- & -- & 0.21 & 0.23 & \textbf{0.16} & \underline{0.20} \\
 & Calls \textdownarrow & -- & -- & \underline{685} & 803 & \textbf{427} & 765 \\
 & Cost (\$) \textdownarrow & -- & -- & 0.50 & 0.50 & \textbf{0.31} & \underline{0.50} \\
\hline

\multirow{5}{*}{\shortstack[l]{\textit{AS} \\ \scriptsize ($B=\$1$)}} 
 & FP \textuparrow  & 34.95 & 50.32 & 53.36 & 58.12 & \underline{65.21} & \textbf{71.71} \\ 
 & NMI \textuparrow & 46.28 & 56.63 & 59.05 & 67.43 & \underline{75.73} & \textbf{85.78} \\ 
 & Tokens (M) \textdownarrow & -- & -- & 0.85 & 0.79 & \textbf{0.54} & \underline{0.72} \\
 & Calls \textdownarrow & -- & -- & 2,439 & 2,063 & \textbf{1,162} & \underline{1,978} \\
 & Cost (\$) \textdownarrow & -- & -- & 1.00 & 1.00 & \textbf{0.60} & \underline{1.00} \\
\hline

\multirow{5}{*}{\shortstack[l]{\textit{Amazon-GP} \\ \scriptsize ($B=\$1.5$)}} 
 & FP \textuparrow  & 39.65 & 58.13 & 67.69 & 78.13 & \underline{82.74} & \textbf{87.38} \\ 
 & NMI \textuparrow & 48.18 & 67.70 & 72.36 & 80.17 & \underline{84.79} & \textbf{93.27} \\ 
 & Tokens (M) \textdownarrow & -- & -- & 1.63 & 1.51 & \textbf{0.77} & \underline{1.22} \\
 & Calls \textdownarrow & -- & -- & 4,728 & 4,236 & \textbf{2,153} & \underline{3,997} \\
 & Cost (\$) \textdownarrow & -- & -- & 1.50 & 1.50 & \textbf{0.86} & \underline{1.50} \\
\hline

\multirow{5}{*}{\shortstack[l]{\textit{Song} \\ \scriptsize ($B=\$1$)}} 
 & FP \textuparrow  & 55.67 & 66.35 & 63.41 & 71.65 & \underline{79.74} & \textbf{91.12} \\ 
 & NMI \textuparrow & 64.04 & 73.04 & 69.28 & 74.80 & \underline{82.49} & \textbf{93.73} \\ 
 & Tokens (M) \textdownarrow & -- & -- & 1.73 & 1.51 & \textbf{0.78} & \underline{1.36} \\
 & Calls \textdownarrow & -- & -- & 5,327 & 4,913 & \textbf{2,742} & \underline{4,521} \\
 & Cost (\$) \textdownarrow & -- & -- & 1.00 & 1.00 & \textbf{0.74} & \underline{1.00} \\
\hline

\multirow{5}{*}{\shortstack[l]{\textit{Alaska} \\ \scriptsize ($B=\$3$)}} 
 & FP \textuparrow  & 39.48 & 56.12 & 54.28 & 65.64 & \underline{74.82} & \textbf{79.58} \\ 
 & NMI \textuparrow & 54.75 & 65.91 & 65.77 & 76.36 & \underline{83.44} & \textbf{89.61} \\ 
 & Tokens (M) \textdownarrow & -- & -- & 2.32 & 2.17 & \textbf{1.72} & \underline{2.12} \\
 & Calls \textdownarrow & -- & -- & 6,792 & 8,194 & \textbf{5,917} & \underline{6,574} \\
 & Cost (\$) \textdownarrow & -- & -- & 3.00 & 3.00 & \textbf{2.10} & \underline{3.00} \\
\hline

\multirow{5}{*}{\shortstack[l]{\textit{Music} \\ \scriptsize ($B=\$2$)}} 
 & FP \textuparrow  & 37.14 & 59.39 & 58.38 & 61.21 & \underline{65.91} & \textbf{77.26} \\ 
 & NMI \textuparrow & 55.48 & 70.82 & 68.53 & 72.33 & \underline{76.65} & \textbf{88.05} \\ 
 & Tokens (M) \textdownarrow & -- & -- & 1.59 & 1.43 & \underline{1.37} & \textbf{1.23} \\
 & Calls \textdownarrow & -- & -- & 5,162 & 4,864 & \textbf{3,253} & \underline{3,868} \\
 & Cost (\$) \textdownarrow & -- & -- & 2.00 & 2.00 & 2.00 & 2.00 \\
\hline

\multirow{5}{*}{\shortstack[l]{\textit{Movies} \\ \scriptsize ($B=\$2$)}} 
 & FP \textuparrow  & 31.31 & 45.82 & 43.76 & 47.34 & \underline{58.63} & \textbf{64.68} \\ 
 & NMI \textuparrow & 48.75 & 63.44 & 57.27 & 61.38 & \underline{68.71} & \textbf{77.11} \\ 
 & Tokens (M) \textdownarrow & -- & -- & 1.54 & 1.63 & \underline{1.46} & \textbf{1.42} \\
 & Calls \textdownarrow & -- & -- & 4,632 & 4,872 & \underline{4,352} & \textbf{4,123} \\
 & Cost (\$) \textdownarrow & -- & -- & 2.00 & 2.00 & 2.00 & 2.00 \\
\bottomrule
\end{tabular}
}
\vspace{-2ex}
\end{table}

%% file: figures/query-cost.tex
\begin{figure}[!t]
\centering
\begin{small}

\begin{tikzpicture}
    \def\addlegendimage{\csname pgfplots@addlegendimage\endcsname}

    \begin{customlegend}
    [legend columns=4,
        legend entries={\algo, \texttt{LLM-CER}, \texttt{ComEM}, \texttt{BatchER}},
        legend style={at={(0.5,1.35)},anchor=north,draw=none,font=\footnotesize,column sep=0.2cm}]
    \addlegendimage{line width=0.4mm,mark size=3pt,mark=o,color=teal}
    \addlegendimage{line width=0.4mm,mark size=3pt,mark=diamond,color=myblue2}
    \addlegendimage{line width=0.4mm,mark size=3pt,mark=triangle,color=cyan}
    \addlegendimage{line width=0.4mm,mark size=3pt,mark=square,color=green!60!black}
    \end{customlegend}
\end{tikzpicture}
\\[-\lineskip]
\vspace{-3ex}
\subfloat[\em \textit{Cora}]{
\begin{tikzpicture}[scale=1,every mark/.append style={mark size=3pt}]
    \begin{axis}[
        height=\columnwidth/2.5,
        width=\columnwidth/2.1,
        ylabel={\it FP},
        xmin=0.06, xmax=0.54,                %
        xtick={0.1, 0.2, 0.3, 0.4, 0.5},
        xticklabels={0.1, 0.2, 0.3, 0.4, 0.5},
        xticklabel style = {font=\footnotesize},
        ymin=66, ymax=88.0,                    %
        ytick={66,71.5, 77, 82.5, 88.00},      %
        yticklabel style = {font=\footnotesize},
        /pgf/number format/fixed,
        /pgf/number format/precision=1,
        /pgf/number format/fixed zerofill,
        every axis y label/.style={font=\footnotesize,at={(current axis.north west)},right=0mm,above=0mm},
    ]
    \addplot[line width=0.4mm, mark=o,color=teal] 
        plot coordinates { (0.1, 81.21) (0.2, 83.52) (0.3, 84.53) (0.4, 84.97) (0.5, 86.34) };
    \addplot[line width=0.4mm, mark=diamond,color=myblue2] 
        plot coordinates { (0.1, 81.39) (0.2, 83.38) (0.3, 84.51) (0.4, 84.51) (0.5, 84.51) };
    \addplot[line width=0.4mm, mark=triangle,color=cyan] 
        plot coordinates { (0.1, 68.94) (0.2, 69.45) (0.3, 73.67) (0.4, 79.63) (0.5, 82.52) };
    \addplot[line width=0.4mm, mark=square,color=green!60!black] 
        plot coordinates { (0.1, 67.29) (0.2, 70.31) (0.3, 72.68) (0.4, 74.95) (0.5, 78.46) };
    \end{axis}
\end{tikzpicture}\hspace{4mm}\label{fig:res-cora}%
}
\subfloat[\em \textit{Alaska}]{
\begin{tikzpicture}[scale=1,every mark/.append style={mark size=3pt}]
    \begin{axis}[
        height=\columnwidth/2.5,
        width=\columnwidth/2.1,
        ylabel={\it FP},
        xmin=0.36, xmax=3.24,                  %
        xtick={0.6, 1.2, 1.8, 2.4, 3.0},
        xticklabels={0.6, 1.2, 1.8, 2.4, 3.0},
        xticklabel style = {font=\footnotesize},
        ymin=45, ymax=85,                    %
        ytick={45,55, 65, 75, 85},      %
        yticklabel style = {font=\footnotesize},
        every axis y label/.style={font=\footnotesize,at={(current axis.north west)},right=0mm,above=0mm},
    ]
    \addplot[line width=0.4mm, mark=o,color=teal] 
        plot coordinates { (0.6, 70.67) (1.2, 73.84) (1.8, 75.63) (2.4, 77.59) (3.0, 79.58) };
    \addplot[line width=0.4mm, mark=diamond,color=myblue2] 
        plot coordinates { (0.6, 69.88) (1.2, 71.65) (1.8, 74.82) (2.4, 74.82) (3.0, 74.82) };
    \addplot[line width=0.4mm, mark=triangle,color=cyan] 
        plot coordinates { (0.6, 57.44) (1.2, 58.69) (1.8, 60.86) (2.4, 63.76) (3.0, 65.64) };
    \addplot[line width=0.4mm, mark=square,color=green!60!black] 
        plot coordinates { (0.6, 48.66) (1.2, 50.26) (1.8, 51.97) (2.4, 53.78) (3.0, 54.28) };
    \end{axis}
\end{tikzpicture}\hspace{0mm}\label{fig:res-alaska}%
}
\\[-\lineskip]
\vspace{-2ex}
\subfloat[\em \textit{Music}]{
\begin{tikzpicture}[scale=1,every mark/.append style={mark size=3pt}]
    \begin{axis}[
        height=\columnwidth/2.5,
        width=\columnwidth/2.1,
        ylabel={\it FP},
        xmin=0.24, xmax=2.16,                  %
        xtick={0.4, 0.8, 1.2, 1.6, 2.0},
        xticklabels={0.4, 0.8, 1.2, 1.6, 2.0},
        xticklabel style = {font=\footnotesize},
        ymin=45, ymax=85,                    %
        ytick={45,55, 65, 75, 85},      %
        yticklabel style = {font=\footnotesize},
        every axis y label/.style={font=\footnotesize,at={(current axis.north west)},right=0mm,above=0mm},
    ]
    \addplot[line width=0.4mm, mark=o,color=teal] 
        plot coordinates { (0.4, 59.12) (0.8, 66.81) (1.2, 71.66) (1.6, 74.39) (2.0, 77.26) };
    \addplot[line width=0.4mm, mark=diamond,color=myblue2] 
        plot coordinates { (0.4, 54.05) (0.8, 56.84) (1.2, 60.34) (1.6, 62.07) (2.0, 65.91) };
    \addplot[line width=0.4mm, mark=triangle,color=cyan] 
        plot coordinates { (0.4, 53.82) (0.8, 55.47) (1.2, 58.49) (1.6, 59.84) (2.0, 61.21) };
    \addplot[line width=0.4mm, mark=square,color=green!60!black] 
        plot coordinates { (0.4, 49.91) (0.8, 52.33) (1.2, 53.89) (1.6, 56.67) (2.0, 58.38) };
    \end{axis}
\end{tikzpicture}\hspace{4mm}\label{fig:res-music}%
}
\subfloat[\em \textit{Movies}]{
\begin{tikzpicture}[scale=1,every mark/.append style={mark size=3pt}]
    \begin{axis}[
        height=\columnwidth/2.5,
        width=\columnwidth/2.1,
        ylabel={\it FP},
        xmin=0.24, xmax=2.16,                  %
        xtick={0.4, 0.8, 1.2, 1.6, 2.0},
        xticklabels={0.4, 0.8, 1.2, 1.6, 2.0},
        xticklabel style = {font=\footnotesize},
        ymin=30, ymax=70,                    %
        ytick={30, 40, 50, 60,70},      %
        yticklabel style = {font=\footnotesize},
        every axis y label/.style={font=\footnotesize,at={(current axis.north west)},right=0mm,above=0mm},
    ]
    \addplot[line width=0.4mm, mark=o,color=teal] 
        plot coordinates { (0.4, 50.34) (0.8, 53.47) (1.2, 58.51) (1.6, 62.59) (2.0, 64.68) };
    \addplot[line width=0.4mm, mark=diamond,color=myblue2] 
        plot coordinates { (0.4, 45.66) (0.8, 49.89) (1.2, 55.34) (1.6, 57.07) (2.0, 58.63) };
    \addplot[line width=0.4mm, mark=triangle,color=cyan] 
        plot coordinates { (0.4, 36.83) (0.8, 40.77) (1.2, 43.32) (1.6, 45.14) (2.0, 47.34) };
    \addplot[line width=0.4mm, mark=square,color=green!60!black] 
        plot coordinates { (0.4, 34.69) (0.8, 39.68) (1.2, 41.39) (1.6, 42.67) (2.0, 43.76) };
    \end{axis}
\end{tikzpicture}\hspace{0mm}\label{fig:res-movies}%
}
\end{small}
\vspace{-2ex}
\caption{ER performance when varying query budget $B$.} 
\label{fig:query_results}
\vspace{-2ex}
\end{figure}

%% file: figures/ablation.tex
\begin{table}[!t]
\centering
\footnotesize
\renewcommand{\arraystretch}{0.95} 
\setlength{\tabcolsep}{3pt} 
\caption{Ablation study and component analysis of \algo.}
\label{tab:merged-analysis}
\vspace{-2ex}
\begin{tabular}{l cc cc cc cc}
\toprule
\multirow{2}{*}{\textbf{Variant}} & \multicolumn{2}{c}{\textit{Cora}} & \multicolumn{2}{c}{\textit{Alaska}} & \multicolumn{2}{c}{\textit{Music}} & \multicolumn{2}{c}{\textit{Movies}} \\
\cmidrule(lr){2-3} \cmidrule(lr){4-5} \cmidrule(lr){6-7} \cmidrule(lr){8-9}
 & \textbf{FP} & \textbf{NMI} & \textbf{FP} & \textbf{NMI} & \textbf{FP} & \textbf{NMI} & \textbf{FP} & \textbf{NMI} \\
\midrule
\textbf{\algo} & \textbf{86.34} & \textbf{90.02} & \textbf{79.58} & \textbf{89.61} & \textbf{77.26} & \textbf{88.05} & \textbf{64.68} & \textbf{77.17} \\
\midrule
\multicolumn{9}{l}{\textit{Ablation Study}} \\
w/o WLP & 78.69 & 81.67 & 74.67 & 82.35 & \underline{71.49} & \underline{79.68} & \underline{57.72} & \underline{71.39} \\
w/o LLM & \underline{80.23} & \underline{83.18} & \underline{75.36} & \underline{82.93} & 70.61 & 78.84 & 55.43 & 68.47 \\
\midrule
\multicolumn{9}{l}{\textit{Signal Selection}} \\
Greedy & \underline{83.46} & \underline{86.79} & \underline{75.72} & \underline{84.85} & \underline{72.78} & \underline{83.69} & \underline{58.76} & \underline{70.42} \\
Random & 81.81 & 84.33 & 71.49 & 80.17 & 70.34 & 78.63 & 54.34 & 67.37 \\
\midrule
\multicolumn{9}{l}{\textit{Graph Initialization}} \\
\texttt{LSH} & 85.21 & 89.17 & 78.01 & 88.54 & 76.15 & 87.25 & 63.12 & 76.05 \\
\texttt{Canopy} & 84.76 & 88.93 & 78.49 & 88.78 & 75.69 & 86.98 & 63.58 & 76.29 \\
\texttt{Sparkly} & \underline{86.15} & \underline{89.68} & \underline{79.35} & \underline{89.32} & \underline{77.05} & \underline{87.74} & \underline{64.59} & \underline{77.11} \\
\bottomrule
\end{tabular}
\vspace{-1ex}
\end{table}

%% file: figures/llms.tex
\definecolor{mutedblue}{RGB}{114, 147, 184}    %
\definecolor{mutedorange}{RGB}{224, 177, 157}  %
\definecolor{mutedgreen}{RGB}{146, 178, 166}   %
\definecolor{mutedpink}{RGB}{211, 165, 180}    %
\definecolor{mutedcyan}{RGB}{155, 196, 201}    %

\begin{figure}[!t]
\centering
\begin{small}
\captionsetup[subfloat]{captionskip=-1ex}
\begin{tikzpicture}
    \begin{customlegend}[
        legend entries={{\texttt{GPT-5-Mini}}, {\texttt{GPT-4o-Mini}}},
        legend columns=2,
        area legend,
        legend style={
            at={(0.5, 1.6)},
            anchor=north,
            draw=none,
            font=\footnotesize,
            column sep=0.2cm
        }
    ]
    \addlegendimage{preaction={fill, mutedblue}, pattern=north west lines, pattern color=white}
    \addlegendimage{preaction={fill, mutedorange}, pattern=crosshatch dots, pattern color=white}
    \end{customlegend}

    \begin{customlegend}[
        legend entries={{\texttt{DeepSeek-R1}}, {\texttt{Phi-4}}, {\texttt{Llama-3.3-70B}}},
        legend columns=3,
        area legend,
        legend style={
            at={(0.5, 1.1)},
            anchor=north,
            draw=none,
            font=\footnotesize,
            column sep=0.2cm
        }
    ]
    \addlegendimage{preaction={fill, mutedgreen}, pattern=north east lines, pattern color=white}
    \addlegendimage{preaction={fill, mutedpink}, pattern=grid, pattern color=white}
    \addlegendimage{preaction={fill, mutedcyan}, pattern=crosshatch, pattern color=white}
    \end{customlegend}
\end{tikzpicture}
\\[-\lineskip]
\vspace{-3ex}
\subfloat[\textit{Cora}]{
\begin{tikzpicture}[scale=1]
    \begin{axis}[
        height=\columnwidth/2.9,
        width=\columnwidth/2.8,
        ybar,
        bar width=0.22cm,
        enlarge x limits=0.3,
        symbolic x coords={CORA},
        xtick=data,
        xticklabels={},
        ylabel={\it FP},
        ymin=77.8, ymax=87.4,
        ytick={78.6, 80.6, 82.6, 84.6, 86.6},
        yticklabel style={font=\tiny, /pgf/number format/fixed, /pgf/number format/precision=1, /pgf/number format/fixed zerofill},
        grid=major,
        grid style={dashed, gray!30},
        every axis y label/.style={font=\footnotesize,at={(current axis.north west)},right=20mm,above=0mm},
    ]
    \addplot [preaction={fill, mutedblue}, pattern=north west lines, pattern color=white] coordinates {(CORA, 86.34)};
    \addplot [preaction={fill, mutedorange}, pattern=crosshatch dots, pattern color=white] coordinates {(CORA, 86.02)};
    \addplot [preaction={fill, mutedgreen}, pattern=north east lines, pattern color=white] coordinates {(CORA, 85.26)};
    \addplot [preaction={fill, mutedpink}, pattern=grid, pattern color=white] coordinates {(CORA, 85.93)};
    \addplot [preaction={fill, mutedcyan}, pattern=crosshatch, pattern color=white] coordinates {(CORA, 85.36)};
    \end{axis}
\end{tikzpicture}
}
\hfil
\subfloat[\textit{Alaska}]{
\begin{tikzpicture}[scale=1]
    \begin{axis}[
        height=\columnwidth/2.9,
        width=\columnwidth/2.8,
        ybar,
        bar width=0.22cm,
        enlarge x limits=0.3,
        symbolic x coords={ALASKA},
        xtick=data,
        xticklabels={},
        ylabel={},
        ymin=75.6, ymax=82.8,
        ytick={76.2, 77.7, 79.2, 80.7, 82.2},
        yticklabel style={font=\tiny, /pgf/number format/fixed, /pgf/number format/precision=1, /pgf/number format/fixed zerofill},
        grid=major,
        grid style={dashed, gray!30},
    ]
    \addplot [preaction={fill, mutedblue}, pattern=north west lines, pattern color=white] coordinates {(ALASKA, 82.07)};
    \addplot [preaction={fill, mutedorange}, pattern=crosshatch dots, pattern color=white] coordinates {(ALASKA, 81.95)};
    \addplot [preaction={fill, mutedgreen}, pattern=north east lines, pattern color=white] coordinates {(ALASKA, 81.09)};
    \addplot [preaction={fill, mutedpink}, pattern=grid, pattern color=white] coordinates {(ALASKA, 81.75)};
    \addplot [preaction={fill, mutedcyan}, pattern=crosshatch, pattern color=white] coordinates {(ALASKA, 79.99)};
    \end{axis}
\end{tikzpicture}
}
\subfloat[\textit{Music}]{
\begin{tikzpicture}[scale=1]
    \begin{axis}[
        height=\columnwidth/2.9,
        width=\columnwidth/2.8,
        ybar,
        bar width=0.22cm,
        enlarge x limits=0.3,
        symbolic x coords={MUSIC},
        xtick=data,
        xticklabels={},
        ymin=71.8, ymax=80.8,
        ytick={72.0, 74.0, 76.0, 78.0, 80.0},
        yticklabel style={font=\tiny, /pgf/number format/fixed, /pgf/number format/precision=1, /pgf/number format/fixed zerofill},
        grid=major,
        grid style={dashed, gray!30},
    ]
    \addplot [preaction={fill, mutedblue}, pattern=north west lines, pattern color=white] coordinates {(MUSIC, 79.75)};
    \addplot [preaction={fill, mutedorange}, pattern=crosshatch dots, pattern color=white] coordinates {(MUSIC, 79.43)};
    \addplot [preaction={fill, mutedgreen}, pattern=north east lines, pattern color=white] coordinates {(MUSIC, 78.67)};
    \addplot [preaction={fill, mutedpink}, pattern=grid, pattern color=white] coordinates {(MUSIC, 79.34)};
    \addplot [preaction={fill, mutedcyan}, pattern=crosshatch, pattern color=white] coordinates {(MUSIC, 78.77)};
    \end{axis}
\end{tikzpicture}
}
\hfil
\subfloat[\textit{Movies}]{
\begin{tikzpicture}[scale=1]
    \begin{axis}[
        height=\columnwidth/2.9,
        width=\columnwidth/2.8,
        ybar,
        bar width=0.22cm,
        enlarge x limits=0.3,
        symbolic x coords={MOVIES},
        xtick=data,
        xticklabels={},
        ylabel={},
        ymin=60.8, ymax=68.0,
        ytick={61.4, 62.9, 64.4, 65.9, 67.4},
        yticklabel style={font=\tiny, /pgf/number format/fixed, /pgf/number format/precision=1, /pgf/number format/fixed zerofill},
        grid=major,
        grid style={dashed, gray!30},
    ]
    \addplot [preaction={fill, mutedblue}, pattern=north west lines, pattern color=white] coordinates {(MOVIES, 67.17)};
    \addplot [preaction={fill, mutedorange}, pattern=crosshatch dots, pattern color=white] coordinates {(MOVIES, 66.76)};
    \addplot [preaction={fill, mutedgreen}, pattern=north east lines, pattern color=white] coordinates {(MOVIES, 66.09)};
    \addplot [preaction={fill, mutedpink}, pattern=grid, pattern color=white] coordinates {(MOVIES, 66.85)};
    \addplot [preaction={fill, mutedcyan}, pattern=crosshatch, pattern color=white] coordinates {(MOVIES, 66.19)};
    \end{axis}
\end{tikzpicture}
}

\end{small}
\vspace{-2ex}
\caption{Varying LLMs.}
\label{fig:llm_comparison_new}
\vspace{-3ex}
\end{figure}

%% file: figures/parameters.tex
\begin{figure}[!t]
\centering
\vspace{-2ex}
\begin{small}
\subfloat[\textit{Cora}]{
    \begin{tikzpicture}[scale=1]
    \begin{axis}[
        height=\columnwidth/2.9,
        width=\columnwidth/2.8, %
        ylabel={\it NMI}, %
        xmin=0.12, xmax=1.08, %
        ymin=87.86, ymax=90.74,
        xtick={0.2, 0.4, 0.6, 0.8, 1.0},
        ytick={88.1, 88.7, 89.3, 89.9, 90.5},
        yticklabel style={
            font=\scriptsize,
            /pgf/number format/fixed,
            /pgf/number format/precision=1,
            /pgf/number format/fixed zerofill
        },
        yticklabel style = {font=\tiny},
        every axis label/.style={font=\footnotesize},
        title style={font=\footnotesize},
        grid=major,
        grid style={dashed, gray!30},
        every axis y label/.style={font=\footnotesize,at={(current axis.north west)},right=20mm,above=0mm},
    ]
    \addplot[line width=0.4mm, mark=o, color=teal]
        plot coordinates {
            (0.2, 88.29) (0.4, 88.97) (0.6, 89.64) (0.8, 90.02) (1.0, 90.61)
        };
    \end{axis}
    \end{tikzpicture}
}
\subfloat[\textit{Alaska}]{
    \begin{tikzpicture}[scale=1]
    \begin{axis}[
        height=\columnwidth/2.9,
        width=\columnwidth/2.8,
        xmin=0.12, xmax=1.08,
        ymin=87.8, ymax=90.2,
        xtick={0.2, 0.4, 0.6, 0.8, 1.0},
        ytick={88, 88.5, 89, 89.5, 90.0}, %
        yticklabel style={
            font=\scriptsize,
            /pgf/number format/fixed,
            /pgf/number format/precision=1,
            /pgf/number format/fixed zerofill
        },
        yticklabel style = {font=\tiny},
        every axis label/.style={font=\footnotesize},
        title style={font=\footnotesize},
        grid=major,
        grid style={dashed, gray!30}
    ]
    \addplot[line width=0.4mm, mark=o, color=teal]
        plot coordinates {
            (0.2, 88.24) (0.4, 88.67) (0.6, 89.72) (0.8, 89.61) (1.0, 89.73)
        };
    \end{axis}
    \end{tikzpicture}
}
\subfloat[\textit{Music}]{
    \begin{tikzpicture}[scale=1]
    \begin{axis}[
        height=\columnwidth/2.9,
        width=\columnwidth/2.8,
        xmin=0.12, xmax=1.08,
        ymin=86.3, ymax=88.7,
        xtick={0.2, 0.4, 0.6, 0.8, 1.0},
        ytick={86.5, 87.0, 87.5, 88.0, 88.5}, %
        yticklabel style={
            font=\scriptsize,
            /pgf/number format/fixed,
            /pgf/number format/precision=1,
            /pgf/number format/fixed zerofill
        },
        yticklabel style = {font=\tiny},
        every axis label/.style={font=\footnotesize},
        title style={font=\footnotesize},
        grid=major,
        grid style={dashed, gray!30}
    ]
    \addplot[line width=0.4mm, mark=o, color=teal]
        plot coordinates {
            (0.2, 86.95) (0.4, 87.21) (0.6, 87.69) (0.8, 88.05) (1.0, 88.32)
        };
    \end{axis}
    \end{tikzpicture}
}
\subfloat[\textit{Movies}]{
    \begin{tikzpicture}[scale=1]
    \begin{axis}[
        height=\columnwidth/2.9,
        width=\columnwidth/2.8,
        xmin=0.12, xmax=1.08,
        ymin=75.3, ymax=77.7,
        xtick={0.2, 0.4, 0.6, 0.8, 1.0},
        ytick={75.5, 76, 76.5, 77, 77.5}, %
        yticklabel style={
            font=\scriptsize,
            /pgf/number format/fixed,
            /pgf/number format/precision=1,
            /pgf/number format/fixed zerofill
        },
        yticklabel style = {font=\tiny},
        every axis label/.style={font=\footnotesize},
        title style={font=\footnotesize},
        grid=major,
        grid style={dashed, gray!30}
    ]
    \addplot[line width=0.4mm, mark=o, color=teal]
        plot coordinates {
            (0.2, 75.68) (0.4, 75.98) (0.6, 76.54) (0.8, 77.11) (1.0, 76.89)
        };
    \end{axis}
    \end{tikzpicture}
}
\end{small}
\vspace{-3ex}
\caption{Varying $\alpha$.}
\label{fig:hyperparameter_alpha}
\vspace{-3ex}
\end{figure}

\begin{figure}[!t]
\centering
\vspace{-2ex}
\begin{small}
\subfloat[\textit{Cora}]{
    \begin{tikzpicture}[scale=1]
    \begin{axis}[
        height=\columnwidth/2.9,
        width=\columnwidth/2.8,
        ylabel={\it NMI},
        xmin=0.12, xmax=1.08,
        ymin=88.3, ymax=90.7, %
        xtick={0.2, 0.4, 0.6, 0.8, 1.0},
        ytick={88.5, 89, 89.5, 90, 90.5},
        yticklabel style={
            font=\scriptsize,
            /pgf/number format/fixed,
            /pgf/number format/precision=1,
            /pgf/number format/fixed zerofill
        },
        yticklabel style = {font=\tiny},
        every axis label/.style={font=\footnotesize},
        title style={font=\footnotesize},
        grid=major,
        grid style={dashed, gray!30},
        every axis y label/.style={font=\footnotesize,at={(current axis.north west)},right=20mm,above=0mm},
    ]
    \addplot[line width=0.4mm, mark=o, color=teal]
        plot coordinates {
            (0.2, 89.68) (0.4, 89.95) (0.6, 90.02) (0.8, 89.21) (1.0, 88.76)
        };
    \end{axis}
    \end{tikzpicture}
}
\subfloat[\textit{Alaska}]{
    \begin{tikzpicture}[scale=1]
    \begin{axis}[
        height=\columnwidth/2.9,
        width=\columnwidth/2.8,
        xmin=0.12, xmax=1.08,
        ymin=87.8, ymax=90.2, %
        xtick={0.2, 0.4, 0.6, 0.8, 1.0},
        ytick={88, 88.5, 89, 89.5, 90}, %
        yticklabel style={
            font=\scriptsize,
            /pgf/number format/fixed,
            /pgf/number format/precision=1,
            /pgf/number format/fixed zerofill
        },
        yticklabel style = {font=\tiny},
        every axis label/.style={font=\footnotesize},
        title style={font=\footnotesize},
        grid=major,
        grid style={dashed, gray!30}
    ]
    \addplot[line width=0.4mm, mark=o, color=teal]
        plot coordinates {
            (0.2, 89.34) (0.4, 89.82) (0.6, 89.61) (0.8, 89.73) (1.0, 89.46)
        };
    \end{axis}
    \end{tikzpicture}
}
\subfloat[\textit{Music}]{
    \begin{tikzpicture}[scale=1]
    \begin{axis}[
        height=\columnwidth/2.9,
        width=\columnwidth/2.8,
        xmin=0.12, xmax=1.08,
        ymin=86.3, ymax=88.7,
        xtick={0.2, 0.4, 0.6, 0.8, 1.0},
        ytick={86.5, 87, 87.5, 88, 88.5}, %
        yticklabel style={
            font=\scriptsize,
            /pgf/number format/fixed,
            /pgf/number format/precision=1,
            /pgf/number format/fixed zerofill
        },
        yticklabel style = {font=\tiny},
        every axis label/.style={font=\footnotesize},
        title style={font=\footnotesize},
        grid=major,
        grid style={dashed, gray!30}
    ]
    \addplot[line width=0.4mm, mark=o, color=teal]
        plot coordinates {
            (0.2, 87.85) (0.4, 88.12) (0.6, 88.05) (0.8, 87.53) (1.0, 87.26)
        };
    \end{axis}
    \end{tikzpicture}
}
\subfloat[\textit{Movies}]{
    \begin{tikzpicture}[scale=1]
    \begin{axis}[
        height=\columnwidth/2.9,
        width=\columnwidth/2.8,
        xmin=0.12, xmax=1.08,
        ymin=75.3, ymax=77.7,
        xtick={0.2, 0.4, 0.6, 0.8, 1.0},
        ytick={75.5, 76, 76.5, 77, 77.5}, %
        yticklabel style={
            font=\scriptsize,
            /pgf/number format/fixed,
            /pgf/number format/precision=1,
            /pgf/number format/fixed zerofill
        },
        yticklabel style = {font=\tiny},
        every axis label/.style={font=\footnotesize},
        title style={font=\footnotesize},
        grid=major,
        grid style={dashed, gray!30}
    ]
    \addplot[line width=0.4mm, mark=o, color=teal]
        plot coordinates {
            (0.2, 77.26) (0.4, 77.03) (0.6, 77.11) (0.8, 76.95) (1.0, 76.84)
        };
    \end{axis}
    \end{tikzpicture}
}
\end{small}
\vspace{-3ex}
\caption{Varying $\theta$.}
\label{fig:hyperparameter_theta}
\vspace{-3ex}
\end{figure}

\begin{figure}[!t]
\centering
\vspace{-2ex}
\begin{small}
\subfloat[\textit{Cora}]{
    \begin{tikzpicture}[scale=1]
    \begin{axis}[
        height=\columnwidth/2.9,
        width=\columnwidth/2.8,
        ylabel={\it NMI},
        xmin=0.6, xmax=5.4, %
        ymin=87.86, ymax=90.74,
        xtick={1, 2, 3, 4, 5},
        ytick={88.1, 88.7, 89.3, 89.9, 90.5},
        yticklabel style={
            font=\scriptsize,
            /pgf/number format/fixed,
            /pgf/number format/precision=1,
            /pgf/number format/fixed zerofill
        },
        yticklabel style = {font=\tiny},
        every axis label/.style={font=\footnotesize},
        title style={font=\footnotesize},
        grid=major,
        grid style={dashed, gray!30},
        every axis y label/.style={font=\footnotesize,at={(current axis.north west)},right=20mm,above=0mm},
    ]
    \addplot[line width=0.4mm, mark=o, color=teal]
        plot coordinates {
            (1, 88.31) (2, 88.67) (3, 89.23) (4, 89.74) (5, 90.02)
        };
    \end{axis}
    \end{tikzpicture}
}
\subfloat[\textit{Alaska}]{
    \begin{tikzpicture}[scale=1]
    \begin{axis}[
        height=\columnwidth/2.9,
        width=\columnwidth/2.8,
        xmin=0.6, xmax=5.4,
        ymin=87.36, ymax=90.24, %
        xtick={1, 2, 3, 4, 5},
        ytick={87.60, 88.20, 88.80, 89.40, 90.00}, %
        yticklabel style={
            font=\scriptsize,
            /pgf/number format/fixed,
            /pgf/number format/precision=1,
            /pgf/number format/fixed zerofill
        },
        yticklabel style = {font=\tiny},
        every axis label/.style={font=\footnotesize},
        title style={font=\footnotesize},
        grid=major,
        grid style={dashed, gray!30}
    ]
    \addplot[line width=0.4mm, mark=o, color=teal]
        plot coordinates {
            (1, 87.89) (2, 88.26) (3, 89.12) (4, 89.86) (5, 89.61)
        };
    \end{axis}
    \end{tikzpicture}
}
\subfloat[\textit{Music}]{
    \begin{tikzpicture}[scale=1]
    \begin{axis}[
        height=\columnwidth/2.9,
        width=\columnwidth/2.8,
        xmin=0.6, xmax=5.4,
        ymin=86.3, ymax=88.7,
        xtick={1, 2, 3, 4, 5},
        ytick={86.5, 87, 87.5, 88, 88.5}, %
        yticklabel style={
            font=\scriptsize,
            /pgf/number format/fixed,
            /pgf/number format/precision=1,
            /pgf/number format/fixed zerofill
        },
        yticklabel style = {font=\tiny},
        every axis label/.style={font=\footnotesize},
        title style={font=\footnotesize},
        grid=major,
        grid style={dashed, gray!30}
    ]
    \addplot[line width=0.4mm, mark=o, color=teal]
        plot coordinates {
            (1, 86.98) (2, 87.59) (3, 88.21) (4, 87.83) (5, 88.05)
        };
    \end{axis}
    \end{tikzpicture}
}
\subfloat[\textit{Movies}]{
    \begin{tikzpicture}[scale=1]
    \begin{axis}[
        height=\columnwidth/2.9,
        width=\columnwidth/2.8,
        xmin=0.6, xmax=5.4,
        ymin=75.3, ymax=77.7,
        xtick={1, 2, 3, 4, 5},
        ytick={75.50, 76.00, 76.50, 77.00, 77.50},
        yticklabel style={
            font=\scriptsize,
            /pgf/number format/fixed,
            /pgf/number format/precision=1,
            /pgf/number format/fixed zerofill
        },
        yticklabel style = {font=\tiny},
        every axis label/.style={font=\footnotesize},
        title style={font=\footnotesize},
        grid=major,
        grid style={dashed, gray!30}
    ]
    \addplot[line width=0.4mm, mark=o, color=teal]
        plot coordinates {
            (1, 75.85) (2, 76.34) (3, 76.79) (4, 77.18) (5, 77.11)
        };
    \end{axis}
    \end{tikzpicture}
}
\end{small}
\vspace{-3ex}
\caption{Varying $m$.}
\label{fig:hyperparameter_m}
\vspace{-2ex}
\end{figure}

%% file: tex/add-relatedwork.tex
\section{Additional Related Work}
\label{app:related_work}

In this section, we provide supplementary details on traditional ER methods, the mechanics of crowdsourcing quality control, and widely used blocking techniques.

\subsection{Traditional, Early Learning-based \& Graph-based ER}
Early ER research was grounded in the probabilistic framework introduced by \citet{fellegi1969theory}. Subsequent traditional approaches utilized classical machine learning models (e.g., SVMs, Random Forests)~\cite{konda2016magellan} relying on hand-crafted string similarity features such as Levenshtein~\cite{yujian2007normalized} and Jaro-Winkler distances~\cite{wang2017efficient}. While computationally efficient on small datasets, these methods struggle to capture the semantic ambiguity of heterogeneous entities compared to modern deep learning.
Prior to the widespread adoption of PLMs, GNNs were applied to capture structural dependencies. Models like \texttt{GraphER}~\cite{li2020grapher} and \texttt{HierGAT}~\cite{yao2022entity} learn structural representations to enhance matching decisions. However, largely pre-dating the PLM era, these methods typically lack deep semantic understanding and operate on static graphs, serving primarily as graph-enhanced matchers rather than holistic clusterers. Recent works also attempt to bridge matching and clustering via Expectation-Maximization~\cite{wu2020zeroer}, transitivity closures~\cite{wu2023ground}, or optimizing human oracle allocation~\cite{verroios2017waldo}. However, they fail in modern cost-effective ER because they cannot dynamically repair disconnected topologies or model monetary budgets for paid LLMs.

\subsection{Mechanics of Crowdsourcing-based Approaches}
While Section~\ref{sec:rel_crowd} discusses the optimization strategies of crowdsourcing, this section details the mechanics used to ensure quality. Crowdsourcing approaches leverage human workers as oracles to resolve ambiguities that similarity metrics cannot handle~\cite{howe2006rise}. However, non-expert workers are prone to errors. To address this, robust methods model worker quality using confusion matrices~\cite{verroios2015entity, wang2015crowd} or incorporate ``control queries'' (questions with known answers) to estimate worker reliability dynamically~\cite{galhotra2018robust}. 
Notably, \citet{whang2013question} formulated question selection as an expected accuracy maximization problem, sharing a similar theoretical spirit with the online knapsack formulation used in our work. Despite these robust quality control mechanisms, the reliance on human workers introduces high latency and cost, paving the way for the LLM-driven approaches we advocate for, which offer a faster and more consistent ``pseudo-crowd.''

\subsection{Distinction from Knowledge Base Alignment}
Iterative propagation is also widely studied in Knowledge Base (KB) alignment (e.g., \texttt{PARIS}~\cite{suchanek2011paris}, \texttt{SiGMA}~\cite{lacoste2013sigma}, and \texttt{LINDA}~\cite{bohm2012linda}). However, applying them to flat Dirty ER introduces two critical mismatches. First, they rely on explicit predicate relations in multi-hop graphs to anchor probabilistic updates. On flat tabular schemas lacking this ontological structure, their propagation devolves into a shallow echo chamber of homogeneous similarities, easily amplifying noise. Second, these classical methods are fundamentally budget-agnostic; they assume structural evaluation is computationally free, rendering them unsuited for modern ER workflows governed by strict monetary LLM token constraints.

\subsection{Blocking and Filtering Techniques}
Blocking is an essential preprocessing step to mitigate the quadratic complexity of comparing all record pairs~\cite{papadakis2020blocking}. We categorize common techniques into rule-based, statistical, and learned blocking methods. Early approaches rely on rigid rules. Hash-based methods, such as {\em locality-sensitive hashing} (\texttt{LSH})~\cite{ebraheem2017deeper}, project similar records into the same buckets based on hash collisions. Sort-based methods, like the {\em sorted neighborhood method}~\cite{hernandez1995merge}, cluster records by sorting textual similarity keys and sliding a window over the sorted list. Recent works leverage representation learning to overcome lexical limitations. \texttt{DeepBlocker}~\cite{thirumuruganathan2021deep} utilizes PLMs such as \texttt{SentenceBERT} to generate semantic embeddings, capturing matches missed by syntactic rules. Similarly, \texttt{Sparkly}~\cite{paulsen2023sparkly} employs a distributed top-$k$ blocking framework using TF/IDF. 
It is important to note that all static blocking methods inadvertently sever the connectivity of the entity graph. If a true match is not placed in the same block, it creates disjoint components that downstream stages cannot bridge~\cite{niknam2022role}, a limitation our dynamic graph refinement framework aims to resolve.

%% file: tex/proof.tex
\section{Theoretical Analyses}
\subsection{Proofs}
\begin{proof}[\bf Proof of Lemma~\ref{lem:LP}]
Let $\G$ be the graph of records and $\mathcal{C} = \{\C_1, \C_2, \dots, \C_{|\C|}\}$ be the disjoint clusters of records. The Kronecker function $\delta$ is defined as  
\begin{equation}
\delta_{i,j} = \begin{cases} 1 & \text{if } r_i,r_j \in \C_k, \\
0 & \text{otherwise}, \end{cases}
\end{equation}
which can be derived from the clusters.

According to \cite{raghavan2007near}, each step in label propagation determines the cluster label of the target node $r_i$ in the following way: 
\begin{align*}
k^\ast = \argmax{1\le k\le |\C|}{\sum_{v_j\in \C_k\cap \N(r_i)}{\delta_{i,j}}},
\end{align*}
which is to locally maximize 
$$\sum_{r_j\in \N(r_i)}{w^{+}_{i,j}\cdot \delta_{i,j}}$$ 
for node $r_i$. 
If we consider all nodes in $\mathcal{R}$, it leads to an overall objective of label propagation:
\begin{small}
\begin{align*}
\max_{\C}\sum_{r_i\in \mathcal{R}} \sum_{r_j\in \N(r_i)}{w^{+}_{i,j}\cdot \delta_{i,j}} = \max_{\C}\sum_{r_i,r_j\in \mathcal{R}}{w^{+}_{i,j}\cdot \delta_{i,j}},
\end{align*}
\end{small}
which completes our proof.
\end{proof}

\begin{proof}[\bf Proof of Theorem~\ref{lem:comp-ratio}]
The proof relies on the analysis of the \textit{Online Knapsack Problem} under the assumption of small weights ($\omega_i \ll B$), as established by ~\citet{chakrabarty2008online}.

Let $z \in [0, 1]$ be the fraction of the budget utilized at any given step, $\mathcal{A}$ be the total marginal gain obtained by our algorithm, and $OPT$ be the gain of an optimal offline algorithm. Based on our algorithm's state, we have $z = \beta / B$, where $\beta$ is the currently consumed budget and $B$ is the total budget. We define the value-density threshold function $\Upsilon(z)$ as follows:
\begin{equation}
    \Upsilon(z) = \frac{L}{e} \left( \frac{U \cdot e}{L} \right)^z
\end{equation}
Observing Eq.~\ref{eq:use-LLM} in Section~\ref{sec:signal-select}, our acceptance criterion $\frac{g_i}{\omega_i} \ge \frac{L}{e} (\frac{U \cdot e}{L})^{\frac{\beta}{B}}$ is mathematically equivalent to checking if the marginal value density satisfies $\frac{g_i}{\omega_i} \ge \Upsilon(z)$.

According to Theorem 2.1 in~\cite{chakrabarty2008online}, this specific threshold function $\Upsilon(z)$ is the solution to the differential equation derived from equating the algorithm's performance to the optimal offline allocation in the worst-case scenario. Specifically, for small weights ($\omega_i \ll B$), the cumulative value obtained by the algorithm is approximately $\int_0^{Z} \Upsilon(z) B dz$, where $Z$ is the final budget utilization. The competitive ratio is bounded by:
\begin{equation}
    \frac{OPT}{\mathcal{A}} \le \ln\left(\frac{U}{L}\right) + 1
\end{equation}
Furthermore, Theorem 2.2 in~\cite{chakrabarty2008online} proves that no deterministic or randomized online algorithm can achieve a strictly lower competitive ratio. Thus, our strategy is theoretically optimal under the assumption that $L$ and $U$ correctly bound the real value densities.
\end{proof}

\subsection{Complexity Analysis}
\label{app:complexity}
Let $D$ be the dimension of the embedding vectors and $L_{seq}$ be the max sequence length for PLM encoding.

\stitle{Time Complexity}
The total time consists of the initialization and the iterative refinement.
\begin{itemize}[leftmargin=*]
    \item \textbf{Graph Initialization:} 
    Generating embeddings using a PLM requires $O(n \cdot L_{seq})$. Constructing the initial $K$NN graph using approximate nearest neighbor search (e.g., FAISS) requires $O(n \log n \cdot D)$. Initialization of weights and labels takes $O(n \cdot K)$. The total initialization is dominated by $O(n \log n \cdot D)$.
    
    \item \textbf{Iterative Refinement:} 
    In each iteration $t$:
    (1) \textit{Label Propagation}: For a node $r_i$, LP aggregates signals from its neighbors $\mathcal{N}(r_i)$. 
    A key observation is that $|\mathcal{N}(r_i)|$ is not static. In Algo.~\ref{alg:local-refinement}, identifying a match triggers edge additions to enforce transitive consistency (clique formation).
    Let $\psi_{\text{max}}$ be the maximum size of a cluster maintained by \algo{}.  The cost of one LP pass over all nodes is bounded by $\sum_{i=1}^N |\mathcal{N}(r_i)| \le O(n \cdot \psi_{\text{max}})$.
    (2) \textit{Adaptive Signal Selection:} Computing the Marginal Value Gain (MVG) involves entropy calculations over the label distribution, taking $O(|\mathcal{N}(r_i)|)$ per node. This is similarly bounded by $O(n \cdot \psi_{\text{max}})$.
    (3) \textit{Local Update with LLMs:} Let $\gamma$ be the proportion of nodes selected for LLM querying ($\gamma \ll 1$ due to budget $B$). For a selected node, we retrieve $m$ candidates. The actual LLM API latency is $\tau_\text{LLM}$. The topological update (merging clusters) takes $O(\psi_{\text{max}})$ time.
\end{itemize}

The overall time complexity is:
\begin{equation}
    O\left( n \log n \cdot D + T_\textnormal{max} \cdot n (\psi_{\text{max}} + \gamma \cdot \tau_\text{LLM}) \right).
\end{equation}
Assuming the \textit{sparse truth}~\cite{betancourt2016flexible,draisbach2019transforming} of dirty ER and the high precision of LLMs in preventing excessive false merges, $\psi_{\text{max}}$ can be treated as a small constant relative to $n$ in practice. Thus, \algo{} achieves near-linear scalability $O(n \log n)$ in practice.

\stitle{Space Complexity}
The memory footprint is dominated by the graph structure. As \algo{} performs transitive closure, the space complexity scales with the number of edges. In the worst case, space is $O(n \cdot \psi_{\text{max}} + n \cdot D)$. Given that real-world entity graphs are collections of small, disjoint cliques, memory usage remains linear w.r.t $n$ in standard settings.

\stitle{Monetary Cost Constraint}
Unlike active learning methods that may incur unbounded costs to reach convergence, \algo{} treats monetary cost as a hard constraint. The condition $\beta + \omega \le B$ ensures the cumulative cost never exceeds the pre-defined budget $B$. Furthermore, Algo.~\ref{alg:llm-select} ensures the budget is spent efficiently, preventing premature depletion on low-value queries. Consequently, the monetary cost is strictly bounded by $O(B)$.

%% file: tex/add-exp.tex
\section{Additional Experimental Details}
\label{sec:appendix_exp_setup}

\subsection{Datasets}
\label{app:datasets}

We utilize eight real-world dirty entity resolution benchmarks spanning diverse domains, including demographics, bibliographic citations, geography, software, music, and e-commerce. These datasets exhibit significant variations in scale, entity dispersion, and data quality. Detailed statistics are provided in Table~\ref{tab:datasets}. Below, we describe the specific characteristics of each dataset:
\begin{enumerate}

    \item \textit{Cora}~\cite{nikoletos2022pyjedai}: A widely used homogeneous dataset containing machine learning papers. It is characterized by high entity dispersion ($E_d \approx 11.56$), meaning it contains small clusters of duplicates with significant textual noise in citation fields (e.g., authors, titles, venues).
    \item \textit{Census}~\cite{nikoletos2022pyjedai}: A demographic dataset sourced from the JedAI toolkit~\cite{papadakis2018return}. It comprises records of individuals with attributes such as first name, last name, and address. Despite its small size ($N=841$), it presents a high density of duplicates relative to the number of unique entities.

    \item \textit{Alaska}~\cite{crescenzi2021alaska}: A challenging, large-scale dataset ($N \approx 12k$) containing camera product specifications collected from multiple e-commerce platforms. Originally introduced in the SIGMOD 2020 programming contest, it requires resolving heterogeneous attributes and handling high variability in product descriptions.
    \item \textit{Amazon-GP}~\cite{papadakis2018return}: A software product dataset matching records between Amazon and Google Play. This dataset is domain-specific, requiring the resolution of software applications where slight variations in version numbers or developer names can distinguish unique entities.

    \item \textit{Song}~\cite{saeedi2017comparative}: A music dataset aggregating track information (e.g., artist, album, release year) from various services. It features moderate dispersion ($E_d \approx 4.06$) and typical data quality issues such as inconsistent abbreviations and missing values.
    \item \textit{Music}~\cite{saeedi2017comparative}: Also referred to as \textit{Music20K} in related literature~\cite{fu2025context}, this is a larger-scale music dataset ($N \approx 19k$). It involves resolving musical entities with a mix of textual and categorical attributes, exhibiting a lower dispersion ($E_d \approx 1.94$) compared to \textit{Song}.
    \item \textit{Movies}~\cite{crescenzi2021alaska}: The largest dataset in our benchmark ($N > 50k$), sourced from the JedAI repository~\cite{papadakis2018return}. It contains movie metadata (e.g., title, director, year, cast) and represents a scalability test for ER approaches due to its size and the presence of many non-duplicate entities (singletons).

    \item \textit{AS} (Autonomous Systems)~\cite{papadakis2018return}: A geographical dataset containing information about internet autonomous systems. It involves resolving organization names and locations, often requiring the identification of varying representations of the same geographic or corporate entity.
\end{enumerate}
\noindent \textbf{Data Availability:} The \textit{Cora}, \textit{Alaska}, \textit{Song}, and \textit{Music} datasets are aligned with standard benchmarks used in recent LLM-based ER studies~\cite{fu2025context}. The \textit{Census}, \textit{Amazon-GP}, and \textit{Movies} datasets are obtained from the JedAI Toolkit repository\footnote{\url{https://github.com/scify/JedAIToolkit}}.

\subsection{Baselines}
\label{app:baselines}
We compare \algo against the following state-of-the-art approaches:
\textit{Unsupervised and Self-supervised Approaches:}
\begin{enumerate}
    \item \texttt{ZeroER}~\cite{wu2020zeroer}: An unsupervised, generative approach based on Gaussian Mixture Models (GMMs). It identifies duplicate entities by distinguishing the specific feature distributions of matching and non-matching pairs.
    \item \texttt{CollaborEM}~\cite{ge2021collaborem}: A self-supervised framework that utilizes an automatic label generation strategy to create pseudo-labels. It employs a collaborative ER training phase to integrate graph and sentence features, achieving performance comparable to supervised methods without human annotation.
\end{enumerate}
\textit{LLM-based Approaches:}
\begin{enumerate}
    \item \texttt{BatchER}~\cite{fan2024cost}: A cost-effective method that groups multiple candidate pairs into a single prompt for batch verification, significantly reducing API calls compared to single-pair queries.
    \item \texttt{ComEM}~\cite{wang2025match}: Moves beyond simple pairwise classification by employing a ``match-compare-select'' strategy. It utilizes the reasoning capabilities of LLMs to select the best match from a set of candidates, aiming to minimize transitive errors.
    \item \texttt{LLM-CER}~\cite{fu2025context}: The current state-of-the-art method that performs in-context clustering. It directly groups records into clusters within the prompt to maintain local consistency, rather than relying solely on pairwise decisions.
\end{enumerate} 

All experiments are conducted on a Linux machine with an NVIDIA A100 GPU (80GB RAM), AMD EPYC 7513 CPU (2.6 GHz), and 1TB RAM.

\subsection{Evaluation Protocol}
\label{app:protocol}
\textbf{\textit{FP-measure} (FP).}
This metric evaluates clustering results based on homogeneity and stability. It is defined as the harmonic mean of \textit{purity} and \textit{inverse-purity}. Let $\mathcal{X} = \{X_1, \dots, X_n\}$ denote the predicted clusters and $\mathcal{Y} = \{Y_1, \dots, Y_m\}$ represent the ground truth clusters. The total number of records is $|\mathcal{R}| = \sum_i |X_i| = \sum_j |Y_j|$.
The \textit{purity} and \textit{inverse-purity} are calculated as follows:
\begin{equation}
    \textsf{purity}(\mathcal{X}, \mathcal{Y}) = \sum_{X_i \in \mathcal{X}} \frac{|X_i|}{|\mathcal{R}|} \max_{Y_j \in \mathcal{Y}} \textsf{Overlap}(X_i, Y_j)
\end{equation}
\begin{equation}
    \textsf{inverse-purity}(\mathcal{X}, \mathcal{Y}) = \sum_{Y_j \in \mathcal{Y}} \frac{|Y_j|}{|\mathcal{R}|} \max_{X_i \in \mathcal{X}} \textsf{Overlap}(Y_j, X_i)
\end{equation}
where $\textsf{Overlap}(X_i, Y_j) = |X_i \cap Y_j| / |X_i|$. The FP-measure is then derived as:
\begin{equation}
    \text{FP}(\mathcal{X}, \mathcal{Y}) = \frac{2}{1/\textsf{purity}(\mathcal{X}, \mathcal{Y}) + 1/\textsf{inverse-purity}(\mathcal{X}, \mathcal{Y})}
\end{equation}

\textbf{\textit{Normalized Mutual Information} (NMI).}
NMI measures the similarity between clustering results and ground truth from an entropy perspective:
\begin{equation}
    \text{NMI}(\mathcal{X}, \mathcal{Y}) = \frac{2I(\mathcal{X}, \mathcal{Y})}{H(\mathcal{X}) + H(\mathcal{Y})}
\end{equation}
where $I(\cdot)$ denotes the mutual information shared between clusters and $H(\cdot)$ represents the entropy, quantifying the uncertainty of elements within clusters.

\textbf{\textit{Monetary Cost}.}
To evaluate the cost-effectiveness of \algo against competitors, we estimate the monetary cost incurred by each method. 
For LLM-based approaches (\algo, \texttt{BatchER}, \texttt{ComEM}, and \texttt{LLM-CER}), the cost is determined by the commercial pricing of the LLM APIs. We utilize \texttt{GPT-5-Mini} as the unified backbone. The cost is calculated based on the number of processed tokens:
\begin{equation}\label{eq:API-cost}
    \text{Cost} = \frac{N_{in}}{1\text{M}} \times P_{in} + \frac{N_{out}}{1\text{M}} \times P_{out}
\end{equation}
where $N_{in}$ and $N_{out}$ correspond to the number of input and output tokens, respectively. Based on the official pricing\footnote{\url{https://azure.microsoft.com/en-us/pricing/details/azure-openai}}, we set $P_{in} = \$0.25$ and $P_{out} = \$2.00$ per 1 million tokens.

\subsection{Implementation Details}
\label{app:implementation}

To ensure a fair and reproducible comparison, we use the original implementations and default hyperparameters from the public code repositories listed in Table~\ref{tab:baselines}, with minor adaptations to align with the experimental setting of \algo. For \texttt{BatchER}, the original implementation~\cite{fan2024cost} relies on in-context learning with 8 labeled demonstrations per prompt. To align with the unsupervised zero-shot setting of our approach, we removed these demonstrations. However, we strictly maintained the original batch structure, packing 5 pairwise verification questions (i.e., 10 records) into a single prompt to preserve its design logic. For \texttt{ComEM}, we employed the ``select'' strategy (match-compare-select) as detailed in~\cite{wang2025match}, which leverages the reasoning capabilities of LLMs to identify the most appropriate match from a set of candidates. For \texttt{LLM-CER}, we adhered to the strategy and parameter configurations specified in the original paper~\cite{fu2025context}, specifically utilizing the 9-record clustering prompt design. Furthermore, to strictly evaluate cost-effectiveness under resource constraints, we applied the identical budget-aware control mechanism across all LLM-based methods (\algo, \texttt{BatchER}, \texttt{ComEM}, and \texttt{LLM-CER}), ensuring that all methods operate within the same monetary limits.

\begin{table}[H]
\centering
\caption{Code repositories for baselines.}
\label{tab:baselines}
\vspace{-2ex}
\small
\resizebox{\columnwidth}{!}{%
\begin{tabular}{ll}
\toprule
\textbf{Baseline} & \textbf{Code Repository} \\
\midrule
\texttt{ZeroER} & \url{https://github.com/chu-data-lab/zeroer} \\
\texttt{CollaborEM} & \url{https://github.com/ZJU-DAILY/CollaborEM} \\
\texttt{BatchER} & \url{https://github.com/fmh1art/BatchER} \\
\texttt{ComEM} & \url{https://github.com/tshu-w/ComEM} \\
\texttt{LLM-CER} & \url{https://github.com/ZJU-DAILY/LLMCER} \\
\bottomrule
\end{tabular}%
}
\end{table}

\subsection{Hyperparameters}

Across all experimental settings, we employ \texttt{GPT-5-Mini} as the backbone for $\textsf{LLM}(\cdot)$ and use \texttt{SBERT} to generate initial record embeddings. To ensure a unified evaluation, we fix the structural and algorithmic hyperparameters for \algo{} across all datasets. Specifically, we set the blocking size for the initial $K$NN graph construction to $K=15$ and the number of candidate records for local LLM updates to $m=5$. Regarding the adaptive signal selection and propagation logic, the edge weight scaling factor is set to $\alpha=0.8$, the confidence threshold for weak label propagation is $\theta=0.6$, the expected LLM confidence is $\Delta^\text{\scriptsize{LLM}}_i=0.95$, and the LLM verified weight is $\sigma_\text{LLM}=1.0$. For the online knapsack formulation, the value density bounds are set to $[L, U] = [20, 1000]$. The only dataset-specific hyperparameter is the total monetary query budget $B$, which is allocated based on the scale and complexity of the dataset: \textit{Census} (\$0.5), \textit{Cora} (\$0.5), \textit{AS} (\$1.0), \textit{Song} (\$1.0), \textit{Amazon-GP} (\$1.5), \textit{Music} (\$2.0), \textit{Movies} (\$2.0), and \textit{Alaska} (\$3.0).

\subsection{Parameter Analysis}

\stitle{Impact of Blocking Parameter $K$}
The parameter $K$ determines the initial sparsity of the graph, representing the trade-off between recall potential and computational complexity. Table~\ref{tab:k_parameter_sensitivity} reports the performance of \algo and LLM-based baselines as $K$ varies from 5 to 25.
As expected, increasing $K$ generally improves performance across all methods by introducing more potential true matches into the candidate pool. However, \algo consistently outperforms competitors across all sparsity levels. Notably, even at a low $K=5$, where the graph is sparse and disconnected, \algo achieves an FP score of 81.23\% on \textit{Cora}, surpassing \texttt{LLM-CER} (80.10\%) and \texttt{BatchER} (73.48\%). This demonstrates \algo's ability to recover missing links via transitive propagation, making it less dependent on dense initial blocking than BMC pipelines.

\stitle{Robustness of LLM Confidence Estimation}
In \algo, $\Delta^\text{\scriptsize{LLM}}_i$ represents the confidence of the LLM oracle (set to 0.95 by default). Prior works adopt costly estimation strategies: \cite{huang2025thriftllm} estimates oracle accuracy using an 80\% data split, while \cite{li2024leveraging} samples a subset (e.g., 100 pairs) for calibration.
To validate the fixed parameter, we follow the sampling strategy to calculate an empirical $\Delta^\text{\scriptsize{LLM}}_i$ using 100 sampled questions per dataset.
Table~\ref{tab:xi-ablation-wide} compares the results using the empirical estimate versus the fixed default. The performance gap is negligible (e.g., a mere 0.13\% FP difference on \textit{Cora}).
Crucially, previous studies~\cite{huang2025thriftllm,li2024leveraging} often treat the estimation cost as external to the query budget. In contrast, \algo treats the monetary budget as a hard constraint. By using a robust default value, \algo avoids the sunk cost of estimation, allowing the entire budget to be utilized for actual entity resolution, thereby maximizing cost-effectiveness.

\input{figures/blocking-k}

\input{figures/llm-utility}

\stitle{Sensitivity of LLM Verification Weight}
The parameter $\sigma_{\text{LLM}}$ controls the influence of the LLM oracle during the label propagation process. Setting it too low diminishes the active oracle's ability to correct propagation paths, whereas setting it excessively high makes the graph brittle to potential LLM hallucinations. Table~\ref{tab:beta_sensitivity} demonstrates that a moderate-to-high weight ($\sigma_{\text{LLM}} = 1.0$) achieves the optimal balance across datasets, validating our robust design choices.

\begin{table}[H]
\centering
\caption{Impact of $\sigma_{\text{LLM}}$ on different datasets.}
\label{tab:beta_sensitivity}
\vspace{-2ex}
\small
\begin{tabular}{lcccc}
\toprule
\multirow{2}{*}{\textbf{ $\sigma_{\text{LLM}}$}} & \multicolumn{2}{c}{\em \textit{Cora}} & \multicolumn{2}{c}{\em \textit{Alaska}} \\
\cmidrule(lr){2-3} \cmidrule(lr){4-5}
& \textbf{FP} & \textbf{NMI} & \textbf{FP} & \textbf{NMI} \\
\midrule
0.5 & 84.81 & 88.63 & 78.74 & 89.23 \\
1.0 & \textbf{86.34} & \textbf{90.02} & \textbf{79.58} & \textbf{89.61} \\
1.5 & 86.29 & 89.78 & 79.62 & 89.37 \\
2.0 & 86.05 & 89.69 & 79.16 & 89.29 \\
\bottomrule
\end{tabular}
\end{table}

\subsection{Comparisons with Supervised Baselines}
To broaden our evaluation, we compare \algo against two widely used supervised PLM-based methods, \texttt{Ditto} and \texttt{DeepMatcher}, under a low-resource setting with a 1:9 data split (10\% training, 90\% testing).
As shown in Table~\ref{tab:supervised_comparison_wide}, \algo demonstrates superior performance (e.g., +20.47\% FP on \textit{Movies}).
While supervised methods often struggle to generalize with limited training data, \algo achieves state-of-the-art results without requiring extensive annotated examples, highlighting its practicality for cold-start scenarios.

\input{figures/supervised}

\subsection{Comparison with Legacy KB-Alignment Baselines}
To investigate the performance of iterative similarity propagation methods originally designed for KB alignment, we evaluate \texttt{PARIS}~\cite{suchanek2011paris} and an adapted version of \texttt{LINDA}~\cite{bohm2012linda} on flat Dirty ER datasets. To ensure a fair structural comparison, we test them under two settings: ``Unconstrained'' (their native global scope) and ``Constrained'' (restricted to \algo{}'s exact \texttt{SBERT} Top-15 blocking graph). 

As shown in Table~\ref{tab:kb_baselines}, these methods suffer catastrophic performance degradation. Because flat tabular datasets lack explicit, multi-hop predicate relations (e.g., \textit{is-advisor-of}) required to anchor probabilistic updates, their propagation devolves into a shallow literal comparator. This structural mismatch fragments the datasets into singletons, which is directly reflected in the exceptionally low FP-measure. This confirms that legacy KB-alignment methods are structurally unsuited for flat ER tasks.

\begin{table}[H]
\centering
\caption{Comparison with KB-Alignment methods.}
\label{tab:kb_baselines}
\vspace{-2ex}
\small
\begin{tabular}{llcc}
\toprule
\textbf{Dataset} & \textbf{Method} & \textbf{Mode} & \textbf{FP} \\
\midrule
\multirow{5}{*}{\em \textit{Cora}} & \texttt{PARIS} & Unconstrained & 8.65 \\
& \texttt{PARIS} & Constrained & 8.65 \\
& \texttt{LINDA} & Unconstrained & 35.44 \\
& \texttt{LINDA} & Constrained & 30.93 \\
& \algo{} & Constrained & \textbf{86.34} \\
\midrule
\multirow{5}{*}{\em \textit{Alaska}} & \texttt{PARIS} & Unconstrained & 12.36 \\
& \texttt{PARIS} & Constrained & 12.36 \\
& \texttt{LINDA} & Unconstrained & 43.98 \\
& \texttt{LINDA} & Constrained & 43.98 \\
& \algo{} & Constrained & \textbf{79.58} \\
\bottomrule
\end{tabular}
\end{table}

\subsection{Robustness to Initial PLM Encoders}
While the quality of the initial $K$NN graph naturally impacts the starting point of the label propagation, \algo{} is designed to be highly robust to initial graph sparsity. We evaluate different PLM encoders, as shown in Table~\ref{tab:plm_encoders}. Regardless of the base encoder's initial representation quality, \algo{} successfully recovers missing topological links through LLM-informed propagation, consistently achieving high downstream performance.

\begin{table}[H]
\centering
\caption{Performance across different PLM encoders.}
\label{tab:plm_encoders}
\vspace{-2ex}
\small
\begin{tabular}{llcc}
\toprule
\textbf{Dataset} & \textbf{Encoder} & \textbf{FP} & \textbf{NMI} \\
\midrule
\multirow{3}{*}{\em \textit{Cora}} & \texttt{SBERT} & \textbf{86.34} & \textbf{90.02} \\
& \texttt{RoBERTa-base} & 85.29 & 88.96 \\
& \texttt{BERT-base-uncased} & 84.67 & 88.12 \\
\midrule
\multirow{3}{*}{\em \textit{Alaska}} & \texttt{SBERT} & \textbf{79.58} & \textbf{89.61} \\
& \texttt{RoBERTa-base} & 76.49 & 85.53 \\
& \texttt{BERT-base-uncased} & 74.14 & 82.66 \\
\bottomrule
\end{tabular}
\end{table}

\subsection{Token-Cost Parity Analysis}
To ensure a strictly fair evaluation of LLM overhead, we investigate whether competitors can bridge the performance gap if granted an equivalent or greater token budget. As shown in Table~\ref{tab:token_parity}, we significantly increase the candidate pool size for \texttt{LLM-CER} to match or exceed \algo{}'s token consumption. Despite utilizing massive token budgets, \texttt{LLM-CER} continues to fall short because it remains constrained by static, disjoint blocks. In contrast, \algo{} leverages its global graph to dynamically hunt for high-value boundary cases.

\begin{table}[H]
\centering
\caption{Comparison under matched Token-Cost parity.}
\label{tab:token_parity}
\vspace{-2ex}
\small
\begin{tabular}{llccc}
\toprule
\textbf{Dataset} & \textbf{Method} & \textbf{Token (M)} & \textbf{FP} & \textbf{NMI} \\
\midrule
\multirow{2}{*}{\em \textit{Cora}} 
& \texttt{LLM-CER} & 180.23 & 85.03 & 87.85 \\
& \algo{} & 150.20 & \textbf{86.34} & \textbf{90.02} \\
\midrule
\multirow{2}{*}{\em \textit{Alaska}} 
& \texttt{LLM-CER} & 192.09 & 75.39 & 83.96 \\
& \algo{} & 152.12 & \textbf{79.58} & \textbf{89.61} \\
\midrule
\multirow{2}{*}{\em \textit{Music}} 
& \texttt{LLM-CER} & 141.25 & 65.67 & 76.29 \\
& \algo{} & 151.23 & \textbf{77.26} & \textbf{88.05} \\
\midrule
\multirow{2}{*}{\em \textit{Movies}} 
& \texttt{LLM-CER} & 151.46 & 58.63 & 68.71 \\
& \algo{} & 151.42 & \textbf{64.68} & \textbf{77.11} \\
\bottomrule
\end{tabular}
\end{table}

\subsection{Theoretical Upper Bound with Ground-Truth Oracle}
To assess our framework's theoretical upper bound and isolate the specific performance degradation caused by LLM hallucinations, we replace the LLM backend with a Ground-Truth (True) Oracle. As detailed in Table~\ref{tab:oracle}, the True Oracle reveals the absolute limit of our graph propagation mechanism. The gap between the LLM Oracle and the True Oracle indicates that \algo{} successfully extracts maximum value from imperfect LLMs, while leaving ample room for future backbone upgrades to further close this gap.

\begin{table}[H]
\centering
\caption{Performance limits with a Ground-Truth Oracle.}
\label{tab:oracle}
\vspace{-2ex}
\small
\begin{tabular}{llcc}
\toprule
\textbf{Dataset} & \textbf{Variant} & \textbf{FP} & \textbf{NMI} \\
\midrule
\multirow{3}{*}{\em \textit{Cora}} & WLP Only (No Oracle) & 80.23 & 83.18 \\
& LLM Oracle (\algo{}) & 86.34 & 90.02 \\
& True Oracle & \textbf{93.11} & \textbf{96.13} \\
\midrule
\multirow{3}{*}{\em \textit{Alaska}} & WLP Only (No Oracle) & 75.36 & 82.93 \\
& LLM Oracle (\algo{}) & 79.58 & 89.61 \\
& True Oracle & \textbf{83.24} & \textbf{91.04} \\
\bottomrule
\end{tabular}
\end{table}

\subsection{Empirical Efficiency and Scalability}
To provide a clear picture of \algo{}'s empirical efficiency, we track the end-to-end running times across various datasets. Despite the iterative nature of the label propagation and the sequential API calls, Table~\ref{tab:efficiency} demonstrates that the selective logic of our Online Knapsack formulation keeps the computational overhead strictly bounded, maintaining runtime parity with standard cascaded pipelines like \texttt{LLM-CER}.

\begin{table}[H]
\centering
\caption{End-to-end running times (in minutes).}
\label{tab:efficiency}
\vspace{-2ex}
\small
\begin{tabular}{lrr}
\toprule
\textbf{Dataset} & \textbf{\texttt{LLM-CER}} & \textbf{\algo{}} \\
\midrule
\em \textit{Cora} & 9.02 & 11.75 \\
\em \textit{Alaska} & 91.61 & 105.96 \\
\em \textit{Music} & 59.47 & 56.42 \\
\em \textit{Movies} & 75.32 & 67.68 \\
\bottomrule
\end{tabular}
\end{table}

\subsection{Analysis of Error Correction}
A key advantage of \algo{} is its ability to mitigate early-stage LLM errors. ``Cheap'' LLMs risk generating false positives, but our framework absorbs these through dense structural neighborhoods. Table~\ref{tab:qualitative} tracks a specific boundary case in the \textit{Movies} dataset where an LLM falsely matched two distinct entities in Iteration 1. Because the propagation relies on the weighted consensus of a node's broader neighborhood, the correct structural signals quickly overrode the single erroneous LLM edge in subsequent iterations, preventing a catastrophic cluster merge. Our broader tracking confirms that approximately 12.26\% of such early-stage LLM false positives across the \textit{Movies} dataset were successfully neutralized by this mechanism.

\begin{table}[H]
\centering
\caption{Target label probabilities.}
\label{tab:qualitative}
\vspace{-2ex}
\small
\begin{tabular}{cccl}
\toprule
\textbf{Iteration} & \textbf{Label 1} & \textbf{Label 2} & \textbf{Status} \\
\midrule
1 & 0.45 & 0.26 & Error \\
2 & 0.32 & 0.41 & Recovered \\
3 & 0.13 & 0.68 & Corrected \\
\bottomrule
\end{tabular}
\end{table}

\section{Prompt Template}
\label{app:prompt}

We provide the detailed prompt template utilized by \algo.
As shown in Figure~\ref{fig:entity_resolution_prompt}, this prompt compels the LLM to act as a discriminative expert, comparing a target record against a set of candidates. We enforce a strict output format (integer index or ``NONE'') to ensure automated parsability. This one-to-many verification facilitates comparative reasoning to identify the best match or reject all candidates, thereby refining graph topology and mitigating error propagation.

\input{tex/prompt}

%% file: figures/blocking-k.tex
\begin{table}[!t]
\centering
\begin{small}
\caption{Performance when varying blocking parameter $K$.}
\label{tab:k_parameter_sensitivity}
\vspace{-3ex}
\renewcommand{\arraystretch}{0.95}
\setlength{\tabcolsep}{2pt}
\resizebox{\columnwidth}{!}{%
\begin{tabular}{c|c|cc|cc|cc|cc}
\toprule
\multirow{2}{*}{\textbf{Setting}} & \multirow{2}{*}{\textbf{Method}} 
& \multicolumn{2}{c|}{\em \textit{Cora}} 
& \multicolumn{2}{c|}{\em \textit{Alaska}} 
& \multicolumn{2}{c|}{\em \textit{Music}} 
& \multicolumn{2}{c}{\em \textit{Movies}} \\
\cline{3-10}
& & \textbf{FP} & \textbf{NMI} 
& \textbf{FP} & \textbf{NMI} 
& \textbf{FP} & \textbf{NMI} 
& \textbf{FP} & \textbf{NMI} \\
\hline
\multirow{4}{*}{\rotatebox[origin=c]{90}{$K=5$}} 
& \texttt{BatchER} & 73.48 & 76.96 & 49.30 & 61.61 & 53.40 & 64.37 & 41.78 & 53.11 \\
& \texttt{ComEM}   & 78.51 & 82.08 & 61.63 & 73.16 & 57.20 & 69.13 & 43.33 & 58.18 \\
& \texttt{LLM-CER}  & \underline{80.10} & \underline{84.87} & \underline{70.41} & \underline{81.15} & \underline{61.50} & \underline{74.36} & \underline{54.22} & \underline{66.42} \\
& \algo   & \textbf{81.23} & \textbf{86.49} & \textbf{74.47} & \textbf{86.08} & \textbf{72.15} & \textbf{84.52} & \textbf{59.57} & \textbf{73.58} \\
\cmidrule(lr){1-10}

\multirow{4}{*}{\rotatebox[origin=c]{90}{$K=10$}} 
& \texttt{BatchER} & 77.56 & 79.72 & 53.66 & 64.63 & 57.71 & 67.35 & 46.22 & 56.28 \\
& \texttt{ComEM}   & 81.37 & 84.26 & 64.73 & 75.45 & 60.36 & 71.46 & 46.68 & 60.65 \\
& \texttt{LLM-CER}  & \underline{82.64} & \underline{86.53} & \underline{73.16} & \underline{82.84} & \underline{64.45} & \underline{76.10} & \underline{57.33} & \underline{68.21} \\
& \algo   & \textbf{84.95} & \textbf{88.41} & \textbf{78.30} & \textbf{88.01} & \textbf{76.02} & \textbf{86.48} & \textbf{63.64} & \textbf{75.73} \\
\cmidrule(lr){1-10}

\multirow{4}{*}{\rotatebox[origin=c]{90}{$K=15$}} 
& \texttt{BatchER} & 78.46 & 81.12 & 54.28 & 65.77 & 58.38 & 68.53 & 46.76 & 57.27 \\
& \texttt{ComEM}   & 82.52 & 85.28 & 65.64 & 76.36 & 61.21 & 72.33 & 47.34 & 61.38 \\
& \texttt{LLM-CER}  & \underline{84.51} & \underline{87.16} & \underline{74.82} & \underline{83.44} & \underline{65.91} & \underline{76.65} & \underline{58.63} & \underline{68.71} \\
& \algo   & \textbf{86.34} & \textbf{90.02} & \textbf{79.58} & \textbf{89.61} & \textbf{77.26} & \textbf{88.05} & \textbf{64.68} & \textbf{77.11} \\
\cmidrule(lr){1-10}

\multirow{4}{*}{\rotatebox[origin=c]{90}{$K=20$}} 
& \texttt{BatchER} & 78.86 & 81.95 & 54.56 & 66.44 & 58.68 & 69.23 & 47.00 & 57.86 \\
& \texttt{ComEM}   & 83.02 & 85.91 & 66.04 & 76.92 & 61.58 & 72.86 & 47.63 & 61.83 \\
& \texttt{LLM-CER}  & \underline{85.28} & \underline{88.03} & \underline{75.50} & \underline{84.27} & \underline{66.51} & \underline{77.42} & \underline{59.16} & \underline{69.40} \\
& \algo   & \textbf{86.76} & \textbf{90.57} & \textbf{79.97} & \textbf{90.16} & \textbf{77.64} & \textbf{88.59} & \textbf{64.99} & \textbf{77.58} \\
\cmidrule(lr){1-10}

\multirow{4}{*}{\rotatebox[origin=c]{90}{$K=25$}} 
& \texttt{BatchER} & 80.34 & 85.25 & 56.16 & 69.90 & 60.26 & 72.66 & 48.64 & 61.40 \\
& \texttt{ComEM}   & 84.16 & 86.34 & 67.28 & 77.42 & 62.85 & 73.39 & 48.98 & 62.44 \\
& \texttt{LLM-CER}  & \underline{86.31} & \underline{89.65} & \underline{76.62} & \underline{85.93} & \underline{67.71} & \underline{79.14} & \underline{60.43} & \underline{71.20} \\
& \algo   & \textbf{88.83} & \textbf{92.42} & \textbf{82.07} & \textbf{92.01} & \textbf{79.75} & \textbf{90.45} & \textbf{67.17} & \textbf{79.51} \\
\bottomrule
\end{tabular}
}
\end{small}
\vspace{0ex}
\end{table}

%% file: figures/llm-utility.tex
\begin{table}[!t]
\centering
\small %
\renewcommand{\arraystretch}{0.9}
\setlength{\tabcolsep}{3pt} %
\caption{Impact of $\Delta^\text{\scriptsize{LLM}}_i$ on different datasets.}
\label{tab:xi-ablation-wide}
\vspace{-2ex}
\begin{tabular}{l cc cc cc cc}
\toprule
\multirow{2}{*}{\textbf{Setting}} & \multicolumn{2}{c}{\textit{Cora}} & \multicolumn{2}{c}{\textit{Alaska}} & \multicolumn{2}{c}{\textit{Music}} & \multicolumn{2}{c}{\textit{Movies}} \\
\cmidrule(lr){2-3} \cmidrule(lr){4-5} \cmidrule(lr){6-7} \cmidrule(lr){8-9}
 & \textbf{FP} & \textbf{NMI} & \textbf{FP} & \textbf{NMI} & \textbf{FP} & \textbf{NMI} & \textbf{FP} & \textbf{NMI} \\
\midrule
Computed $\Delta$ & 86.21 & 89.74 & 78.72 & \textbf{90.11} & 76.63 & \textbf{88.56} & 63.84 & 76.99 \\
Fixed ($0.95$)    & \textbf{86.34} & \textbf{90.02} & \textbf{79.58} & 89.61 & \textbf{77.26} & 88.05 & \textbf{64.68} & \textbf{77.11} \\
\bottomrule
\end{tabular}
\vspace{-1ex}
\end{table}

%% file: figures/supervised.tex
\begin{table}[!t]
\centering
\small
\caption{Comparisons with supervised PLM-based methods (1:9 data split).}
\label{tab:supervised_comparison_wide}
\vspace{-2ex}
\setlength{\tabcolsep}{3pt} %
\renewcommand{\arraystretch}{1.1}
\begin{tabular}{l cc cc cc cc}
\toprule
\multirow{2}{*}{\textbf{Method}} & \multicolumn{2}{c}{\textit{Cora}} & \multicolumn{2}{c}{\textit{Alaska}} & \multicolumn{2}{c}{\textit{Music}} & \multicolumn{2}{c}{\textit{Movies}} \\
\cmidrule(lr){2-3} \cmidrule(lr){4-5} \cmidrule(lr){6-7} \cmidrule(lr){8-9}
 & \textbf{FP} & \textbf{NMI} & \textbf{FP} & \textbf{NMI} & \textbf{FP} & \textbf{NMI} & \textbf{FP} & \textbf{NMI} \\
\midrule

\texttt{DeepMatcher} & 67.19 & 72.56 & 53.91 & 61.75 & 50.69 & 59.81 & 42.17 & 52.79 \\
\texttt{Ditto}       & \underline{70.66} & \underline{77.67} & \underline{62.39} & \underline{71.49} & \underline{52.84} & \underline{61.49} & \underline{44.21} & \underline{55.78} \\
\algo{}              & \textbf{86.34} & \textbf{90.02} & \textbf{79.58} & \textbf{89.61} & \textbf{77.26} & \textbf{88.05} & \textbf{64.68} & \textbf{77.11} \\
\bottomrule
\end{tabular}
\vspace{-1ex}
\end{table}

%% file: tex/prompt.tex
\begin{figure}
\centering
\small
\begin{tcolorbox}[title=\texttt{Entity Resolution}]
\textbf{Instruction}

\texttt{You are an entity resolution expert. Determine if the following entity matches any of the candidate entities.\\}

\texttt{Requirements:}

\texttt{1. Please respond with ONLY the number (1, 2, ...) of the matching candidate, or "NONE" if no match is found.}

\texttt{2. Your response should be a single number or "NONE", nothing else.}

\tikz \draw[dashed] (0,0) -- (\linewidth,0);

\textbf{Input Prompt}

\texttt{Query Entity:\\}
\texttt{Intermezzo in B minor, Op. 119 No. 1: Adago - Brahms: Complete Works\\}

\texttt{Candidate Entities:\\}
\texttt{1. Opus 6 No. 12 in B minor (HWV 330) - I. Largo - Concerti Grossi op. 6\\}
\texttt{2. 003-Symphony 1 in C minor, op. 11: III. Menuetto \& Trio, Allegro di Molto\\}
\texttt{3. Intermezzo in B minor, Op. 119 No. 1: Adagio\\}
\texttt{4. 017-Intermezzo in B minor, Op. 119 No. 1: Adagio\\}
\texttt{5. Johannes Brahms - Concerto for Violin and Orchestra in D major, Op. 77: II. Adagio}

\tikz \draw[dashed] (0,0) -- (\linewidth,0);

\textbf{Response}

\texttt{3}
\end{tcolorbox}
\caption{An example of the prompt used for LLM verification on \textit{Music}.}\label{fig:entity_resolution_prompt}
\end{figure}